
\documentclass[preprint,12pt,review]{elsarticle}




\usepackage{amsmath}
\usepackage{amssymb}
\usepackage{mathtools}
\usepackage{amsthm}


\usepackage{graphicx}
\usepackage{subfigure}
\usepackage{diagbox}
\usepackage[noend]{algorithmic}
\usepackage{algorithm}
\usepackage{hyperref}
\usepackage{enumitem}
\usepackage{changes}
\usepackage{xcolor, soul} 
\usepackage{comment}

\theoremstyle{plain}
\newtheorem{theorem}{Theorem}[section]
\newtheorem{proposition}[theorem]{Proposition}

\theoremstyle{definition}
\newtheorem{definition}[theorem]{Definition}

\theoremstyle{remark}
\newtheorem{remark}[theorem]{Remark}

\DeclareMathOperator*{\argmax}{arg\,max}

\journal{Pattern Recognition}

\begin{document}

\begin{frontmatter}

\author[label1]{Junyoung Byun}
\author[label2]{Yujin Choi}
\author[label2]{Jaewook Lee\corref{cor1}}
\cortext[cor1]{Corresponding author at: Seoul National University, 1, Gwanak-ro, Seoul, 08826, Republic of Korea \\ E-mail addresses: junyoungb@cau.ac.kr (J. Byun), uznhigh@snu.ac.kr (Y. Choi), jaewook@snu.ac.kr (J. Lee)}
\affiliation[label1]{organization={Chung-Ang University},
            addressline={84, Heukseok-ro}, 
            city={Seoul},
            postcode={06974}, 
            country={Republic of Korea}}

\affiliation[label2]{organization={Seoul National University},
            addressline={1, Gwanak-ro}, 
            city={Seoul},
            postcode={08826}, 
            country={Republic of Korea}}

\title{Improving the Utility of Differentially Private Clustering through Dynamical Processing}

\begin{abstract}

This study aims to alleviate the trade-off between utility and privacy of differentially private clustering. Existing works focus on simple methods, which show poor performance for non-convex clusters. \textcolor{black}{To fit complex cluster distributions}, we propose sophisticated dynamical processing inspired by Morse theory, with which we hierarchically connect the Gaussian sub-clusters obtained through existing methods. 
Our theoretical results imply that the proposed dynamical processing introduces little to no additional privacy loss.
Experiments show that our framework can improve the clustering performance of existing methods at the same privacy level.
\end{abstract}



\begin{keyword}


Clustering, Differential privacy, Dynamical Processing, Morse Theory
\end{keyword}

\end{frontmatter}


\section{Introduction}
\label{introduction}

\textcolor{black}{Machine learning algorithms can inadvertently expose sensitive information about individuals used for training. Differential privacy (DP) offers a mathematical guarantee to prevent the leakage of personal information from algorithm outputs, although it often comes at the cost of reduced algorithm performance. Consequently, research in differentially private machine learning focuses on maintaining algorithm performance while ensuring privacy.}

Clustering is widely used in applications like recommendation systems, marketing, and fraud detection, where protecting customer information is crucial \textcolor{black}{\citep{ribero2022federating,chatterjee2024digital}}. However, differentially private clustering has received less attention compared to supervised learning. Existing studies primarily address simple methods such as k-means clustering \textcolor{black}{\citep{epasto2024k,li2024gapbas}} and mixtures of Gaussians (MoGs) \textcolor{black}{\citep{park2017dp, li2022privacy,afzali2024mixtures}}, which struggle to represent complex, nonconvex cluster structures.

Inspired by Morse theory from differential topology, Morse clustering remains an active area of research \textcolor{black}{\citep{strazzeri2022possibility, pandey2022morse}}. \cite{lee2009dynamic} applied Morse theory to enhance the performance of non-private support vector-based clustering algorithms. 
However, applying DP to their method is challenging because calculating the kernel function during inference requires support vectors from the training data.

This study addresses these limitations by applying Morse theory to enhance differentially private clustering algorithms. Our method leverages MoG models, which do not require training data once the density function is estimated.
From the MoG density function obtained through differentially private clustering, we build an associated dynamical system and construct a weighted graph to connect Gaussian sub-clusters, effectively representing complex clusters through the merging of sub-clusters.

Theoretical results demonstrate that the proposed dynamical processing is DP-friendly, introducing minimal additional privacy loss beyond existing methods. Furthermore, our method is inductive and capable of achieving any desired number of clusters.

\section{Related Works}

Recent research has focused on improving the sample-and-aggregate framework. \cite{cohen2021differentially} introduced an enhanced framework for well-separated datasets, offering better DP guarantees and sample complexity bounds. \textcolor{black}{\cite{kamath2022new}} applied a sample-and-aggregate framework to MoG models.
\cite{ni2018dp} and \textcolor{black}{\cite{wu2024density}} investigated differentially private DBSCAN algorithms. \textcolor{black}{Recently, \cite{li2024gapbas} proposed a genetic algorithm-based privacy budget allocation strategy in differentially private k-means algorithm.}

\label{related}

\section{Differential Privacy}
\label{dp}


DP \citep{dwork2014algorithmic} defines privacy by limiting how much a mechanism's output can change with alterations in the dataset.

\begin{definition}\label{def:dp}
(Differential privacy)
A randomized mechanism $\mathcal{M}$ is \textit{${(\epsilon,\delta)}$-differentially private} if for any set of possible outputs $\mathcal{S} \subseteq Range(\mathcal{M})$ and two neighboring datasets $D,D'\in\mathcal{D}$ which differ in exactly one data sample:   
\begin{equation}
Pr[\mathcal{M}(D) \in \mathcal{S}] \leq e^\epsilon Pr[\mathcal{M}(D') \in \mathcal{S}]+\delta.    
\end{equation}
When $\delta = 0$, the mechanism $\mathcal{M}$ is called ${\epsilon}$-differentially private.
\end{definition}

In this definition, $\epsilon$ is called the \textit{privacy budget}, indicating that privacy degrades as $\epsilon$ increases. A well-known result of DP is composition, which states that privacy degrades with multiple data accesses. While the most widely recognized composition method is advanced composition \citep{dwork2014algorithmic}, this study utilizes a variant of DP called zero-concentrated DP (zCDP) \citep{bun2016concentrated}, which provides a tighter analysis of privacy loss. Below we introduce some properties of zCDP relevant to this paper.

\begin{remark}\label{remarks1}
    (Proposition 1.3 \cite{bun2016concentrated}) If $\mathcal{M}$ satisfies $\rho$-zCDP, then $\mathcal{M}$ satisfies $(\rho+2\sqrt{\rho\log{(1/\delta)}}, \delta)$-DP for any $\delta>0$.
\end{remark}

\begin{remark}\label{remarks2}
    (Proposition 1.4 \cite{bun2016concentrated}) If $\mathcal{M}$ satisfies $\epsilon$-DP, then $\mathcal{M}$ satisfies $\frac{1}{2}\epsilon^2$-zCDP.
\end{remark}

\begin{remark}\label{remarks3}
    (Proposition 1.6 \cite{bun2016concentrated}) Let function $f : \mathcal{X}^n\rightarrow\mathbb{R}$ has a sensitivity $\Delta$. Then on input $x$, releasing a sample from $\mathcal{N}(f(x), \sigma^2)$ satisfies $(\Delta^2/2\sigma^2)$-zCDP.
\end{remark}

\begin{remark}\label{remarks4}
    (Lemma 1.8 \cite{bun2016concentrated}) Let randomized mechanisms $\mathcal{M}_1$ and $\mathcal{M}_2$ satisfy $\rho_1$-zCDP and $\rho_2$-zCDP, respectively. Then the mapping $\mathcal{M}_{1,2}=(\mathcal{M}_1, \mathcal{M}_2)$ satisfies $(\rho_1+\rho_2)$-zCDP.
\end{remark}

Two important properties of DP, \textcolor{black}{post-processing and parallel composition,} allow for controlling privacy loss. Post-processing states that any processing that does not access the data does not increase the privacy loss.

\begin{remark} 
(Post-Processing) Let the randomized mechanism $\mathcal{M}$ be $(\epsilon, \delta)$-differentially private. Then, for any data-independent randomized mapping $f$, $f \circ \mathcal{M}$ is also $(\epsilon, \delta)$-differentially private.
\end{remark}

When the data is partitioned, we can exploit parallel composition.
\begin{remark}
(Parallel Composition) Let the dataset $D$ be partitioned by disjoint subset $D_i$ for $i = 1,2,\dots, n$ and let $\mathcal{M}_i$ is $\epsilon_i$ - differentially private mechanism which takes $D_i$ as input. Then, releasing all of the results $\mathcal{M}_1, \mathcal{M}_2, \dots, \mathcal{M}_n$ is $\max_i\epsilon_i$-differentially private. 
\end{remark}

\section{Morse Theory from a Dynamical System Perspective}\label{morse}

\subsection{Morse theory}

Let $\{\boldsymbol{x}_i\}\subset \mathcal{X}$ be a given dataset of $N$
points, with $\mathcal{X}:=\mathbb{R}^D$, the data space. Given a smooth real-valued function $\mathit{f}:\mathcal{X}\to \mathbb{R}$ mapping each point to its height, the inverse image of a point $a\in \mathbb{R}$, called a level set, can be decomposed into several separate connected components $C_i,\; i=1,...,m$, i.e.,
\begin{equation}
\begin{aligned}
\mathcal{X}_a:=&\mathit{f}^{-1}(-\infty, a]
=\{\boldsymbol{x}\in \mathcal{X}:\mathit{f}(\boldsymbol{x})\leq a\}=C_1
\cup\cdots\cup C_m\label{eq:4}
\end{aligned}
\end{equation}

A state vector $\boldsymbol{x}$ satisfying $\nabla \mathit{f}(\boldsymbol{x})=0$ is called an {\em equilibrium vector} of $\mathit{f}$. An equilibrium vector $\boldsymbol{x}$ is {\em hyperbolic} if the Hessian of $f$ at $\boldsymbol{x}$ restricted to the tangent space to $\mathcal{X}$ at $\boldsymbol{x}$, denoted by $H_\mathit{f}(\boldsymbol{x})$, has no zero eigenvalues. A hyperbolic equilibrium vector $\boldsymbol{x}$ is called an {\em index-k equilibrium vector} if $H_\mathit{f}(\boldsymbol{x})$ has $k$ negative eigenvalues. The index corresponds to the dimension of the subspace consisting of directions in which $f$ decreases. 
We denote an index-$k$ equilibrium vector by $\boldsymbol{x}^k$.

A function $f$ is a {\em Morse function} if all its equilibrium vectors are hyperbolic. Additionally, $f$ is {\em separating} if distinct equilibrium vectors have distinct functional values.
A basic result of Morse theory \citep{hirsch2012differential, palais2000generalized} is that the class of the Morse functions forms an open, dense subset of all the smooth functions in the $C^{2}$-topology, in other words, almost all smooth functions are Morse functions. Therefore, in this paper, it is assumed that the function considered is Morse, which is ``generic''.

\begin{figure}[t]
\centering
\subfigure[]{%
\includegraphics[width=0.48\linewidth]{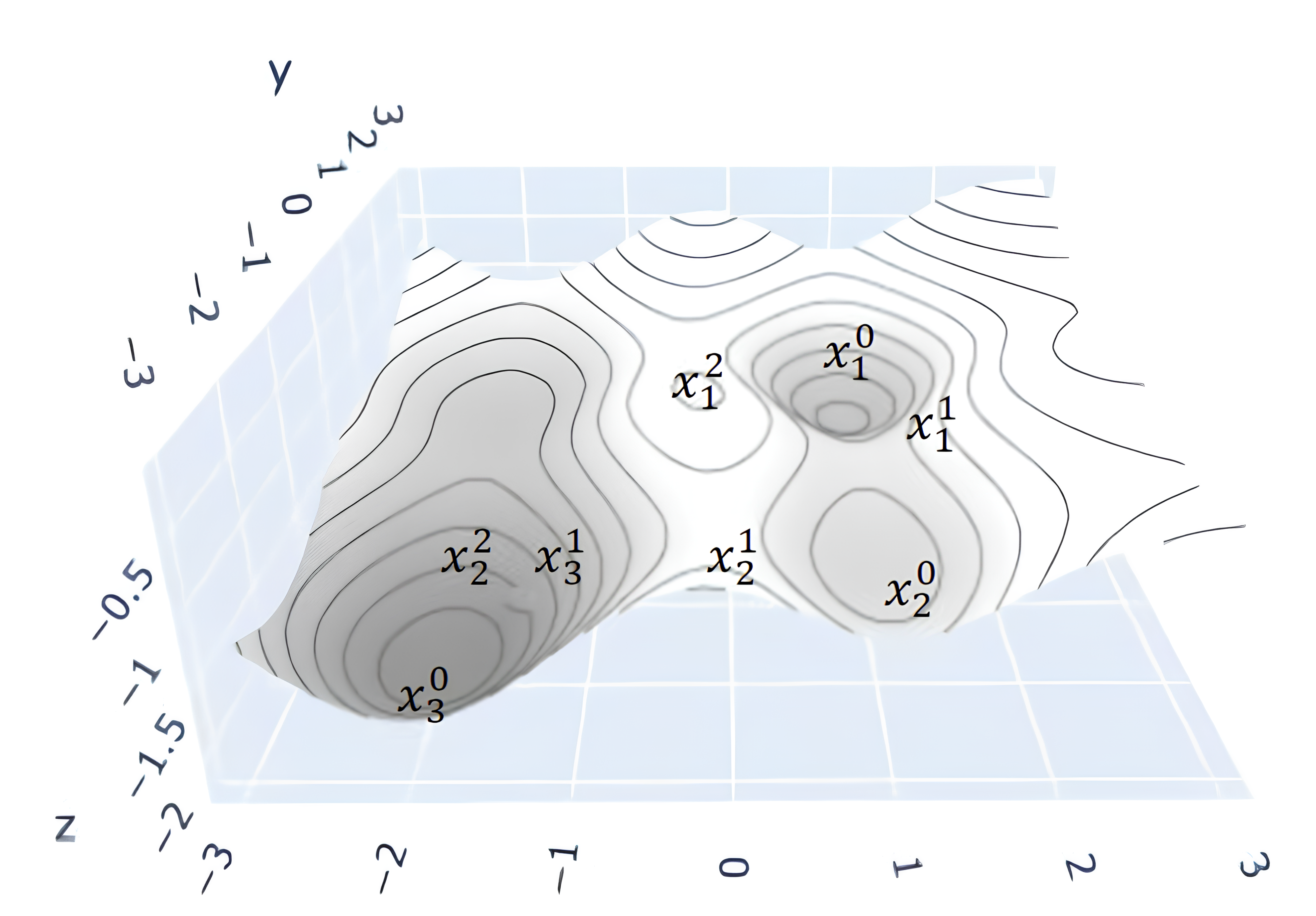}
\label{fig:MoG_morse}}
\subfigure[]{%
\includegraphics[width=0.48\linewidth]{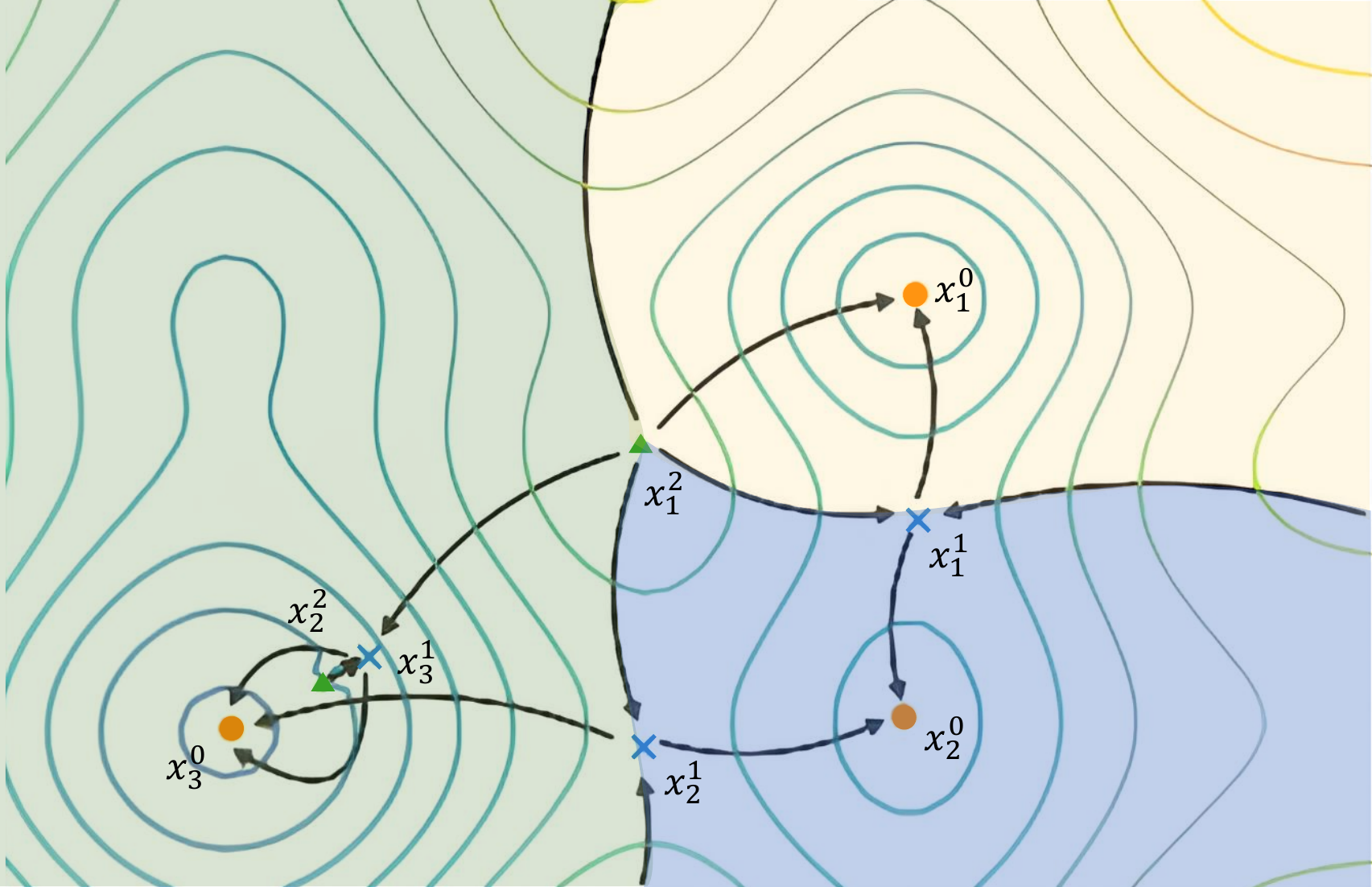}
\label{fig:morse_ex}}
\caption{Illustration of Morse theory. \textcolor{black}{(a) A three-dimensional surface plot of a toy example.} (b) A level curve of (a). In (b), `o's are stable equilibrium points, and `x' refers to the index-1 equilibrium point.  The triangle marker refers to the index-two equilibrium point. Note that
$\boldsymbol{x}_3^1$ is an index-1 equilibrium vector, but not a TEV.}
\label{fig:morse}
\end{figure}
Morse theory addresses when the topology of $\#|\mathcal{X}_a|$ changes as $a$ varies as follows: ($\#|A|$ denotes the number of connected components of a set $A$. See \citep{hirsch2012differential,  palais2000generalized} for more details.)
\begin{itemize}
\item $\#|\mathcal{X}_a|$ increases by one, i.e. $\#|\mathcal{X}_{a+\varepsilon}|=\#|\mathcal{X}_{a-\varepsilon}|+1$ for a sufficiently small $\varepsilon>0$, if and only if, $a\in \{\mathit{f}(\boldsymbol{x}_1^0),...,\mathit{f}(\boldsymbol{x}_s^0)\}$.
\item $\#|\mathcal{X}_a|$ decreases by one, i.e. $\#|\mathcal{X}_{a+\varepsilon}|=\#|\mathcal{X}_{a-\varepsilon}|-1$ for a sufficiently small $\varepsilon>0$, if and only if, $a\in \{\mathit{f}(\boldsymbol{x}_1^1),...,\mathit{f}(\boldsymbol{x}_t^1)\}$ and also the following Morse relation holds.
\begin{equation}
\begin{aligned}
H_{0}(\mathcal{X}_{a-\varepsilon})&\cong H_{0}(\mathcal{X}_{a+\varepsilon})\oplus \mathbb{R},\\
H_{q}(\mathcal{X}_{a-\varepsilon})&\cong H_{q}(\mathcal{X}_{a+\varepsilon}),~~\mbox{for}~~q>0 \label{eq:524}
\end{aligned}
\end{equation}
where $H_{q}(\cdot)$ is the $q$-th homology space and $\cong$ implies homotopy equivalent. Such $\boldsymbol{x}_i^1$ are called {\em transition equilibrium vectors} (TEVs) (e.g., in Figure \ref{fig:morse}, $\boldsymbol{x}_1^1$, $\boldsymbol{x}_2^1$ are TEVs, but $\boldsymbol{x}_3^1$ is not, \textcolor{black}{as $\#|\mathcal{X}|$ remains constant despite being an index-1 equilibrium point}).
\item $\#|\mathcal{X}_a|$  remains constant,  i.e., $\#|\mathcal{X}_{a+\varepsilon}|=\#|\mathcal{X}_{a-\varepsilon}|$ for a sufficiently small $\varepsilon>0$, if and only if, $a$ passes the value $\mathit{f}(\boldsymbol{x}^k)$ of an index-k equilibrium vector $\boldsymbol{x}^k$ with $k>1$.
    \end{itemize}

\subsection{Dynamical system perspective}

Morse theory is not directly applicable due to the difficulty of computing the $q$-th homology space $H_{q}(\cdot)$, for example. The generalized gradient vector fields can help compute it \citep{rot2014morse}. 
Associated with the Morse function $\mathit{f}$, we can build the following generalized gradient system:
\begin{eqnarray}
\frac{d{\boldsymbol{x}}}{dt}=-{\mathrm{grad}}_R \mathit{f}({\boldsymbol{x}})\equiv
-R({\boldsymbol{x}})^{-1}\nabla \mathit{f}({\boldsymbol{x}}).\label{eq:grad}
\end{eqnarray}
where $R(\cdot)$ is a {\em Riemannian metric} on $\mathcal{X}$ (i.e. $R({\boldsymbol{x}})$ is a positive definite symmetric matrix for all ${\boldsymbol{x}}\in \mathcal{X}$. The existence of a unique solution (or trajectory) ${\boldsymbol{x}}(\cdot):\mathbb{R}\rightarrow \mathcal{X}$ for each initial
condition ${\boldsymbol{x}}(0)$ is guaranteed by the smoothness of $f$ \citep{guckenheimer2013nonlinear, khalil2002nonlinear} (i.e. $\mathit{f}$ is twice differentiable). Without loss of generality, we assume that the trajectory ${\boldsymbol{x}}(\cdot)$ is defined for all $t\in
\mathbb{R}$ for any initial condition ${\boldsymbol{x}}(0)$, achievable under a suitable re-parametrization \citep{guckenheimer2013nonlinear}.

A hyperbolic equilibrium vector is called an (asymptotically) \textit{stable} equilibrium vector, denoted by $\boldsymbol{x}^0$, if all eigenvalues of its corresponding Jacobian are positive, and an \textit{unstable} equilibrium vector (or a \textit{repellor}), denoted by $\boldsymbol{x}^D$, if all eigenvalues of its corresponding Jacobian are negative. A basic result is that every local minimum of the Morse function $\mathit{f}$ corresponds to an (asymptotically) stable equilibrium vector (SEV) of the system \eqref{eq:grad}.

The (practical) {\em basin cell} of attraction of a SEV $\boldsymbol{x}^0$ is the closure of an open and connected stable manifold, defined by
\begin{eqnarray}
\mathfrak{B}(\boldsymbol{x}^0):=\textrm{cl}(\{{\boldsymbol{x}}(0)\in \mathcal{X}:
\lim_{t\to\infty}{\boldsymbol{x}}(t)=\boldsymbol{x}^0\}).
\end{eqnarray}
A basin cell groups similar data points through the system (\ref{eq:grad}).
Two SEVs, $\boldsymbol{x}^0_i$ and $\boldsymbol{x}^0_j$, are {\em adjacent} if there
exists an index-1 equilibrium vector $\boldsymbol{x}^1_{ij}\in \mathfrak{B}(\boldsymbol{x}^0_i)\cap
\mathfrak{B}(\boldsymbol{x}^0_j)$. It can be shown \citep{lee2004dynamical} that such an index-1 equilibrium vector is in fact a TEV between $\boldsymbol{x}^0_a$ and $\boldsymbol{x}^0_b$ that satisfies the Morse relation \eqref{eq:524} and can be computed by using the system \eqref{eq:grad}.

\section{Proposed Method}

\begin{figure}[t]
\centering
\subfigure[]{%
\includegraphics[width=0.485\linewidth]{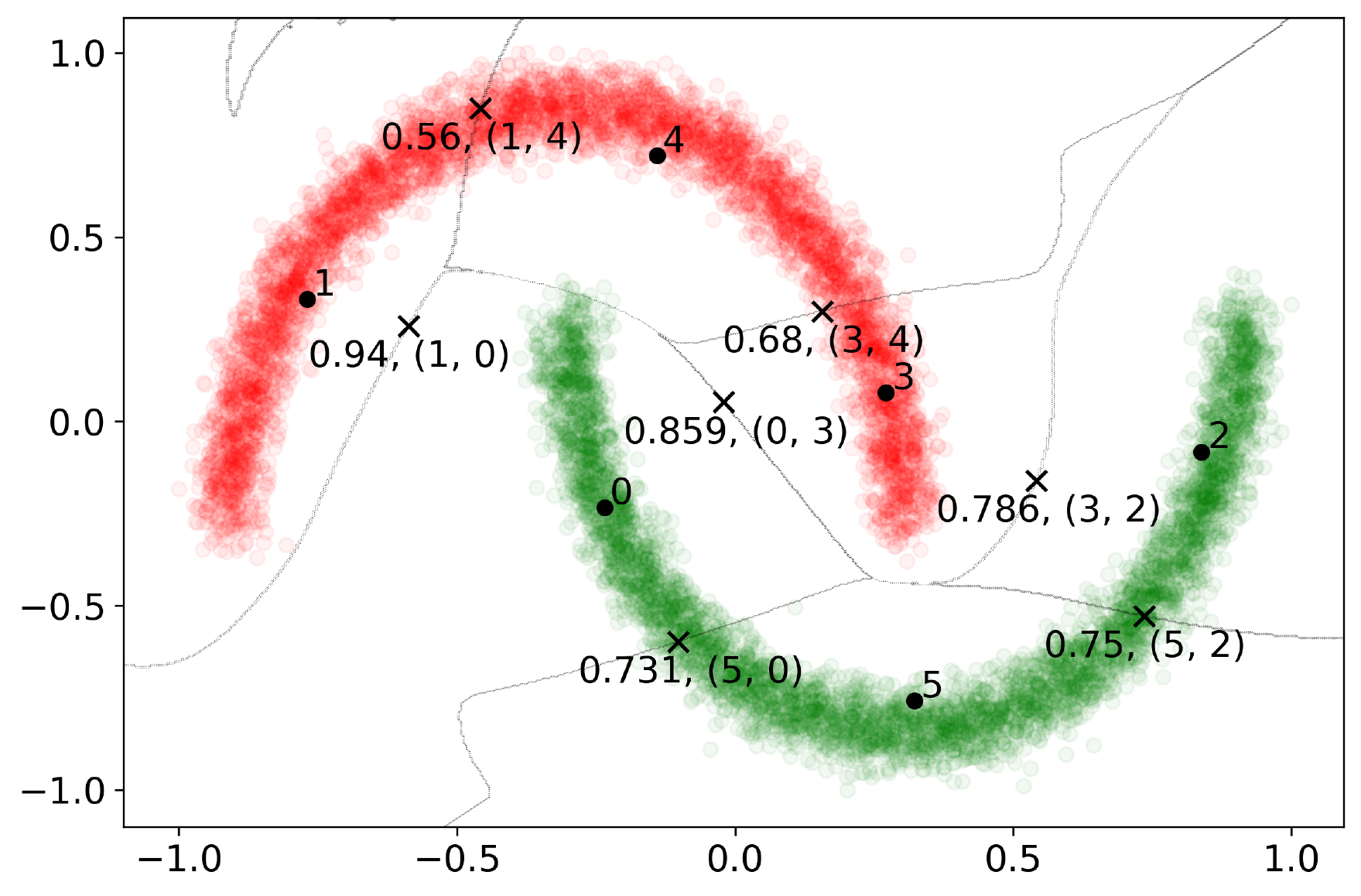}
\label{fig:after-con}}
\subfigure[]{%
\includegraphics[width=0.485\linewidth]{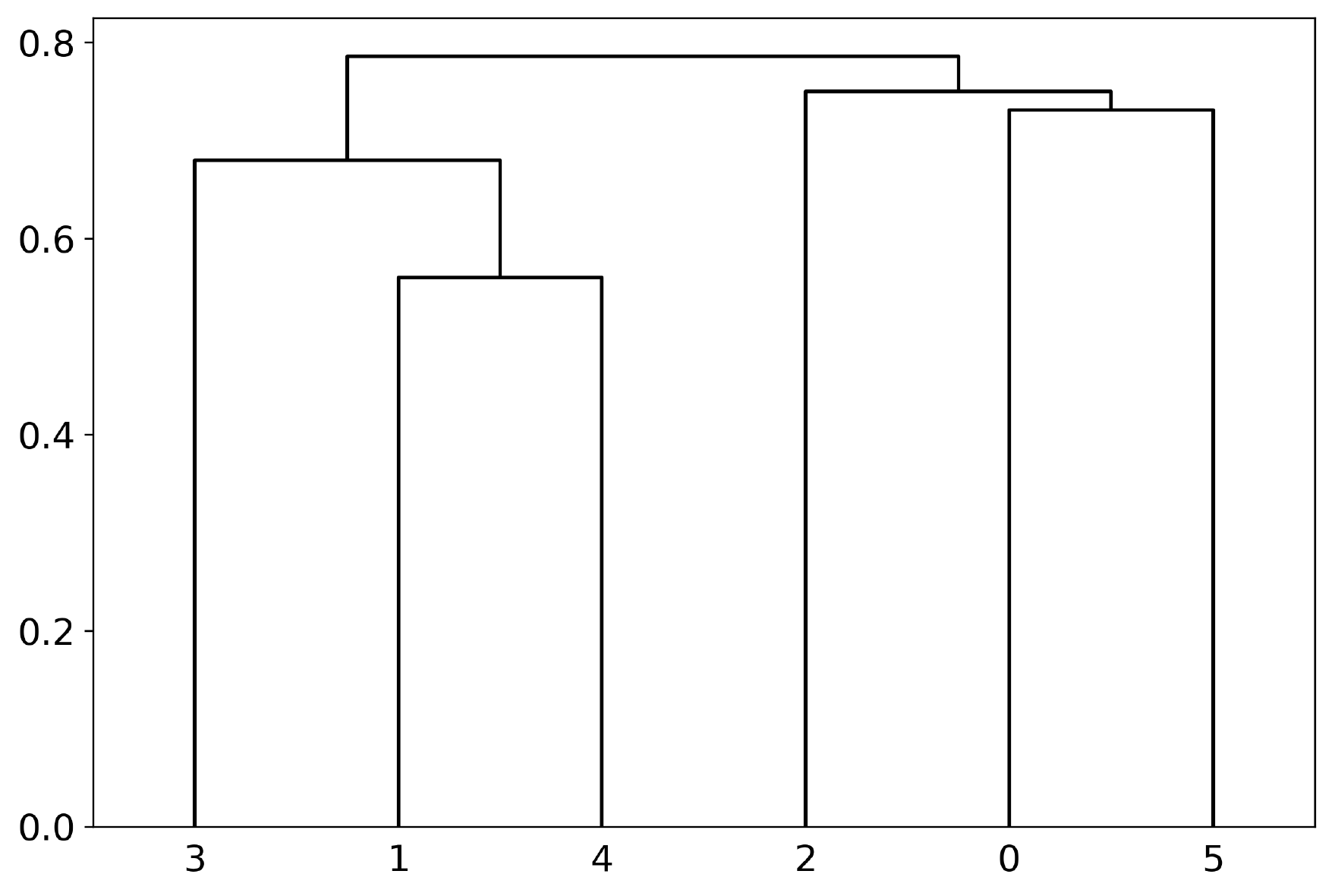}
\label{fig:dendrogram}}
\caption{Description of the proposed dynamical processing. After obtaining an MoG density from textbf{DPClustering}, (a) TEVs ('x') between adjacent centers (two numbers in parentheses) are identified, and the weight between two centers is calculated as the density (number at the bottom left) of the corresponding TEV. (b) A dendrogram is drawn based on the weighted graph. 
The x-axis denotes the index of centers, and the y-axis denotes the weight between the centers.
}
\label{fig:twomoon}
\end{figure}

\begin{algorithm}[h]
  \caption{Proposed Method}
  \label{alg:proposed}
\begin{algorithmic}[1]
  \STATE \textcolor{black}{{\bfseries Input:} Input data $\{\boldsymbol{x}_n\}_{n=1}^N\in [-1,1]^{N\times D}$, privacy budget ($\epsilon, \delta$), number of initial clusters $K_0$, number of final clusters $K$, number of iterations $\tau_1, \tau_2$, line search parameter $m$}
  \STATE \textcolor{black}{{\bfseries Output:} Final clusters $\{C\}_{k=1}^K$}
  
  \STATE \textcolor{black}{$\{\pi_k, \boldsymbol{\mu}_k, \boldsymbol{\Sigma}_k\}_{k=1}^{K_0} \leftarrow$ \textbf{DPClustering}(Input data, $\epsilon$, $\delta$, $K_0$, $\tau_1$)}
  \STATE \textcolor{black}{Compute $f$ from $\{\pi_k, \boldsymbol{\mu}_k, \boldsymbol{\Sigma}_k\}_{k=1}^{K_0}$}
  \STATE \textcolor{black}{$G=(V,E) \leftarrow$ \textbf{TEVGraph}($\{\boldsymbol{\mu}_k\}_{k=1}^{K_0}$, $f$, $m$, $\tau_2$)}
  \STATE \textcolor{black}{$\{C\}_{k=1}^K \leftarrow$ \textbf{MergeCluster}($G$, $K$)}
\end{algorithmic}
\end{algorithm}

The proposed method consists of three steps as summarized in Algorithm \ref{alg:proposed}.
First, differentially private parameters of the density function $p$, assumed to be an MoG with $K_0$ clusters, are estimated from any existing method (line 3). We consider $-\ln p$ as the Morse function $f$. The centers of the MoG and their corresponding clusters effectively approximate SEVs and the associated basin cells, respectively. Subsequently, TEVs of $f$ are identified according to \eqref{eq:grad}, and a graph $G$ is constructed (line 5). Finally, adjacent centers are connected until the desired number of clusters $K$ is achieved (line 6).  Figure \ref{fig:twomoon} provides a visualization of the second and third steps.

\subsection{Differentially private clustering}

An MoG function can be represented as:
\begin{equation}
p(\boldsymbol{x})=\sum_{k=1}^K \pi_k(2\pi)^{-\frac{D}{2}}|\boldsymbol{\Sigma}_k|^{-\frac{1}{2}}
e^{-\frac{1}{2}(\boldsymbol{x}-\boldsymbol{\mu}_k)^T\boldsymbol{\Sigma}_k^{-1}
(\boldsymbol{x}-\boldsymbol{\mu}_k)}, \label{eq:mog2}
\end{equation}
where $\boldsymbol{\mu}_k\in\mathbb{R}^D, \boldsymbol{\Sigma}_k\in\mathbb{R}^{D\times D}$ are the mean and covariance of each normal distribution, and $\pi_k$ is the probability of belonging to the k-th cluster.
The associated MoG gradient system can be written as follows:
\begin{equation}
\begin{aligned}
\label{eq:grad2}
\frac{d{\boldsymbol{x}}}{dt} = R({\boldsymbol{x}})^{-1}\nabla \ln \mathit{p}({\boldsymbol{x}})=-R({\boldsymbol{x}})^{-1}\sum_{k=1}^K \omega_k(\boldsymbol{x})
\boldsymbol{\Sigma}_k^{-1}
(\boldsymbol{x}-\boldsymbol{\mu}_k),
\end{aligned}
\end{equation}
where $R(\cdot)$ is a {\em Riemannian metric} on $\mathcal{X}$ and
\begin{equation}
\begin{aligned}
\omega_k(\boldsymbol{x})=\frac{\pi_k(2\pi)^{-\frac{D}{2}}
|\boldsymbol{\Sigma}_k|^{-\frac{1}{2}}e^{-\frac{1}{2}(\boldsymbol{x}-\boldsymbol{\mu}_k)^T\boldsymbol{\Sigma}_k^{-1}
(\boldsymbol{x}-\boldsymbol{\mu}_k)}}{\sum_{k=1}^K \pi_k(2\pi)^{-\frac{D}{2}}|\boldsymbol{\Sigma}_k|^{-\frac{1}{2}}
e^{-\frac{1}{2}(\boldsymbol{x}-\boldsymbol{\mu}_k)^T\boldsymbol{\Sigma}_k^{-1}
(\boldsymbol{x}-\boldsymbol{\mu}_k)} }>0.
\end{aligned}
\end{equation}

We focus on MoG density because the most studies are on k-means clustering or MoGs, from which the MoG density can be estimated. 

\subsubsection{Differentially private mixture of Gaussians}

\cite{park2017dp} proposed a differentially private MoG (DPMoG) method by adding noise to parameters $\mu_k, \Sigma_k, \pi_k$ in the M-step of the EM algorithm. Unlike k-means, MoG parameters are calculated using all samples, requiring larger noise proportional to the number of clusters, which degrades DPMoG's performance. Therefore, we present DPMoG-hard, a modified version whose performance is less affected by the number of clusters. To enable parallel composition, we transform the responsibility obtained in the E step so each sample is assigned with a probability of 1 to a single cluster. This technique, known as hard EM \citep{same2007online}, is first applied to enhance DP. DPMoG-hard is an instantiation of \textbf{DPClustering} in Algorithm \ref{alg:proposed}.

\begin{algorithm}[tb]
  \caption{DPMoG-hard\textcolor{black}{: An instantiation of DPClustering}}
  \label{alg:DPMoGhard}
\begin{algorithmic}[1]
  \STATE {\bfseries Input:} \textcolor{black}{Input data $\{\boldsymbol{x}_n\}_{n=1}^N\in [-1,1]^{N\times D}$, privacy budget ($\epsilon, \delta$), number of clusters $K$, number of iterations $\tau$}
  \STATE {\bfseries Output:} Parameters $\{ \pi_k ^{\tau} \}_{k=1}^{K}, \{\boldsymbol{\mu}_k^{\tau}\}_{k=1}^{K}, \{\boldsymbol{\Sigma}_k^{\tau}\}_{k=1}^{K}$
  \STATE \textcolor{black}{Initialize parameters $\{ \pi_k ^0 \}_{k=1}^{K} \{\boldsymbol{\mu}_k^0\}_{k=1}^{K}, \{\boldsymbol{\Sigma}_k^0\}_{k=1}^{K}$}
  \STATE $r\leftarrow 1+3D+2D^2$;\quad$\sigma\leftarrow\sqrt{\frac{r\tau}{2}}\frac{\sqrt{\log{(1/\delta)}+\epsilon}+\sqrt{\log{(1/\delta)}}}{\epsilon}$ \textcolor{black}{\textit{ // Calculate from \ref{eq:alg_sigma}}}
  \STATE $C_k\leftarrow \emptyset, \quad k=1,\ldots,K$
  \FOR {$t=1$ to $\tau$}
      \FOR {$n=1$ to $N$}
        \FOR {$k=1$ to $K$}
          \STATE $\gamma_{nk}^{t}\leftarrow\pi_k^{t-1}\mathcal{N}(\boldsymbol{x}_n|\boldsymbol{\mu}_k^{t-1}, \boldsymbol{\Sigma}_k^{t-1})/\sum_{l=1}^K{\pi_l^{t-1}\mathcal{N}(\boldsymbol{x}_n|\boldsymbol{\mu}_l^{t-1}, \boldsymbol{\Sigma}_l^{t-1})}$
        \ENDFOR
        \STATE $k^*=\argmax_k{\gamma_{nk}^{t}}$;\quad$C_{k^*}\leftarrow C_{k^*}\cup\{n\}$\textcolor{black}{\textit{ // hard EM}}
      \ENDFOR
      \FOR {$k=1$ to $K$}
        \STATE $N_k\leftarrow \vert C_k \vert + \mathcal{N}(0,\sigma^2)$;\quad$\pi_k^{t}\leftarrow N_k/N$
        \STATE $\boldsymbol{\mu}_k^{t}\leftarrow \frac{1}{N_k}(\sum_{n\in C_k}{\boldsymbol{x}_n}+\mathcal{N}(\boldsymbol{0},\sigma^2\boldsymbol{I}_D))$
        \STATE $\boldsymbol{\Sigma}_k^{t}\leftarrow \frac{1}{N_k}(\sum_{n\in C_k}{(\boldsymbol{x}_n-\boldsymbol{\mu}_k^{t})(\boldsymbol{x}_n-\boldsymbol{\mu}_k^{t})^T}+\mathsf{sym}(\mathcal{N}(\boldsymbol{0},\sigma^2\boldsymbol{I}_{D(D+1)/2})))$
      \ENDFOR
  \ENDFOR
\end{algorithmic}
\end{algorithm}

The detailed procedure of DPMoG-hard is presented in Algorithm \ref{alg:DPMoGhard}. For the mathematical formulation of this algorithm, refer to \cite{park2017dp}. Specifying a distribution (lines 9, 12-14) implies performing random sampling from that distribution.
$\mathsf{sym}$ transforms a $D(D+1)/2$-dimensional vector into a $D\times D$-dimensional symmetric matrix.   

Using the composition theorem, Algorithm \ref{alg:DPMoGhard} satisfies $(\epsilon,\delta)$-DP by setting $\sigma$ as specified in line 3. Due to DPMoG-hard's hard clustering nature, the parallel composition theorem ensures that the total privacy loss remains unaffected by the number of clusters, $K$.

\begin{proposition}
    Algorithm \ref{alg:DPMoGhard} is $(\epsilon, \delta)$-differentially private.
\end{proposition}
\begin{proof}
    The sensitivity of $|C_k|$ is 1, and the sensitivity of each coordinate of $\sum_{n\in C_k}{\boldsymbol{x}_n}$ is 2. $\sum_{n\in C_k}{\boldsymbol{x}_n\boldsymbol{x}_n^T}$ has $D(D+1)/2$ unique elements. While $D$ diagonal elements have sensitivity 1, the others have sensitivity 2. 
    
    Using remark \ref{remarks3} and \ref{remarks4}, it can be directly shown that for an iteration,
    \begin{itemize}[label={-},itemsep=-2pt,left=0pt]
    \item $N_k$ is $(1/2\sigma^2)$-zCDP,
    \item $\sum_{n\in C_k}{\boldsymbol{x}_n}+\mathcal{N}(\boldsymbol{0},\sigma^2\boldsymbol{I}_D)$ is $(2^2 D/2\sigma^2)$-zCDP,
    \item $\sum_{n\in C_k}{(\boldsymbol{x}_n-\boldsymbol{\mu}_k^{t})(\boldsymbol{x}_n-\boldsymbol{\mu}_k^{t})^T}+\mathsf{sym}(\mathcal{N}(\boldsymbol{0},\sigma^2\boldsymbol{I}_{D(D+1)/2}))$ is $(D/2\sigma^2)$-zCDP for diagonal elements and $(2^2 D(D-1)/4\sigma^2)$-zCDP for the others.
    \end{itemize}
    
    By remark \ref{remarks4}, $\tau$ iterations of the algorithm is $(1+3D+2D^2)\tau/2\sigma^2$-zCDP. According to remark \ref{remarks1}, Algorithm \ref{alg:DPMoGhard} is $(\epsilon,\delta)$-DP with 
    \textcolor{black}{
    \begin{equation}\label{eq:alg_sigma}
        \sigma=\sqrt{\frac{r\tau}{2}}\frac{\sqrt{\log{(1/\delta)}+\epsilon}+\sqrt{\log{(1/\delta)}}}{\epsilon},
    \end{equation}
where $r=1+3D+2D^2$.}
\end{proof}

\subsubsection{Differentially private k-means clustering}

There have been various studies on k-means clustering algorithms that satisfy DP \textcolor{black}{\citep{hu2023lightweight,epasto2024k,li2024gapbas}}. Despite differences in privacy and utility analysis, our approach can be applied to any of them since they all output centers and allocate samples to centers.
However, k-means clustering does not directly provide a density function. To address this, we build a Gaussian density function by calculating the covariance matrix of each cluster and using it as $\Sigma_k$ of each Gaussian distribution. As in the last line of Algorithm \ref{alg:DPMoGhard}, noise is when calculating the covariance matrices to ensure DP, causing a little additional privacy loss.

\subsection{Transition equilibrium vectors and the weighted graph} 
To generate clusters of arbitrary shapes using the system \eqref{eq:grad2}, we group similar basin cells based on mutual proximity or dissimilarity. Using adjacent SEVs and TEVs, we construct the weighted graph $G=(V,E)$ with a derived distance \citep{lee2009dynamic}: %

\begin{itemize}
 \item[1.] The vertices $V$ of $G$ are the SEVs, $\boldsymbol{x}^0_{1},...,\boldsymbol{x}^0_{s}$, $i=1,...,s$ of
(\ref{eq:grad}).
 \item[2.] The edge $E$ of $G$ connect vertices of adjacent SEVs, $\boldsymbol{x}^0_{i}, \boldsymbol{x}^0_{j}$, with weights $d_E(
\boldsymbol{x}^0_{i}, \boldsymbol{x}^0_{j}):=\mathit{f}(\boldsymbol{x}^1_{ij})$, where $\boldsymbol{x}^1_{ij}$ is a TEV between them.
\end{itemize}

For $\mathcal{X}_a$, we then construct a sub-graph $G_a=(V_a,E_a)$ with:
\begin{itemize}
\item[1.] The vertices $V_a\subset V$ of $G_a$ consists of SEVs, $\boldsymbol{x}^0_i$, in $V$ with
$\mathit{f}(\boldsymbol{x}^0_i)<a$.
 \item[2.] The edge $E_a\subset E$ of $G_a$ consists of $(\boldsymbol{x}^0_{i},
\boldsymbol{x}^0_{j})\in E$ with $d_E(\boldsymbol{x}^0_{i}, \boldsymbol{x}^0_{j})<a$.
\end{itemize}

Since the distance metric is based on the density function, it describes the data distribution more effectively than Euclidean distance methods. 
It also efficiently partitions the entire data space \citep{lee2009dynamic}.
We generalize the results from \cite{lee2009dynamic} using the Riemannian metric in the context of MoG (Theorem \ref{thm:1}).

The next result shows the equivalence between the topological structures of $G_a$ and the connected components of $\mathcal{X}_a$.
\begin{proposition}\citep{lee2006dynamic}
With respect to the MoG (\ref{eq:mog2}), $\boldsymbol{x}^0_{i}$ and $\boldsymbol{x}^0_{j}$ are in the same connected component of the sub-graph
$G_a$ if, and only if, $\boldsymbol{x}^0_{i}$ and $\boldsymbol{x}^0_{j}$ are in the same cluster of the level
set $\mathcal{X}_a$, that is, each connected component of $G_a$ corresponds to a cluster of $\mathcal{X}_a$.
\label{pro:1}
\end{proposition}

Morse theory ensures a TEV exists between two adjacent SEVs for any Morse function. Since the distance $d_E$ is defined by the function value of the TEV, connecting SEVs based on this distance constructs a new sub-graph with a modified value of $a$. Thus, we can efficiently construct an optimal sub-graph for clustering data based on the partitioned data manifold.

The next result shows an inductive property of the system \eqref{eq:grad2}, which allows to assign a label to a new test point without retraining.
This naturally partitions the sample space by assigning points in different basin cells to their respective SEVs. 
Conversely, traditional density-based clustering methods require retraining to label a new test point, wasting the privacy budget.

\begin{theorem}[Inductive Property]\label{thm:1}
Suppose that the Riemannian metric $R(\cdot)$ on $\mathcal{X}$ of the associated MoG gradient system (\ref{eq:grad2}) has a finite condition number. Then the whole data space is almost surely a pairwise disjoint union of the basin cells $\mathfrak{B}(\boldsymbol{x}^0_i)$ where ${\boldsymbol{x}^0_i}_{|i \in \{1, \ldots, s\}}$ are the SEVs of the system (\ref{eq:grad2}), i.e.
\begin{eqnarray} 
\mathcal{X}=\mathfrak{B}(\boldsymbol{x}^0_1)\dot\cup\cdots \dot\cup \mathfrak{B}(\boldsymbol{x}^0_s).
\end{eqnarray}
Here, an almost surely disjoint union $A\dot\cup B$ of two nonempty sets $A$ and $B$ means that $A\cap B$ has a Lebesgue measure zero.
\end{theorem}

\begin{proof}
Following the proof of Lasalle's invariance property theorem \citep{guckenheimer2013nonlinear,khalil2002nonlinear}, it can be easily shown that every bounded trajectory of the system (\ref{eq:grad2}) converges to one of the equilibrium vectors. Therefore it is enough to show that every trajectory is bounded.

Since $R({\boldsymbol{x}})^{-1}$ and $\Sigma_k^{-1}$ are positive definite, we can let $A_k(\boldsymbol{x})$ be the Cholesky factorization of the positive definite matrix $R({\boldsymbol{x}})^{-1}\Sigma_k^{-1}$ that satisfies $R({\boldsymbol{x}})^{-1}\Sigma_k^{-1}= A_k(\boldsymbol{x})^TA_k(\boldsymbol{x})$. Then $(\boldsymbol{x}^TR({\boldsymbol{x}})^{-1}\Sigma_k^{-1}\boldsymbol{x})= \|A_k(\boldsymbol{x})\boldsymbol{x}\|^2$. 
Since the condition number of $R({\boldsymbol{x}})$ is bounded, by the spectral theorem, there exist smooth eigenvalue functions $\lambda_{\mathrm{min}}^k(\boldsymbol{x}),\lambda_{\mathrm{max}}^k(\boldsymbol{x})>0$ and $\gamma>0$ such that
\footnotesize
\begin{eqnarray}
\|A_k(\boldsymbol{x})\|=\sqrt{\lambda_{\mathrm{max}}^k(\boldsymbol{x})},  \quad \forall k=1,...K,~\forall \boldsymbol{x}\in \mathcal{X},  \\ \kappa(A_k(\boldsymbol{x})):=
\left(\lambda_{\mathrm{max}}^k(\boldsymbol{x})/\lambda_{\mathrm{min}}^k(\boldsymbol{x})\right)^{1/2}\leq \gamma,  \quad \forall k=1,...K,~\forall \boldsymbol{x}\in \mathcal{X}
\end{eqnarray}
\normalsize
where $\|\cdot\|$ denotes the Euclidean- or $\ell _{2}$-norm.
Now let $V(\boldsymbol{x})=\frac12 \|\boldsymbol{x}\|^2$ and choose $\Upsilon>\gamma\max_{k}\|\boldsymbol{\mu}_k\|$. Then for any $L>\Upsilon$, and for all $\|\boldsymbol{x}\|=L$, we have
\footnotesize
\begin{eqnarray}
&\frac{\partial}{\partial t} V(\boldsymbol{x})&=\boldsymbol{x}^T\frac{d
\boldsymbol{x}}{dt}=
-\boldsymbol{x}^T\sum_k\omega_k(\boldsymbol{x})R({\boldsymbol{x}})^{-1}\Sigma_k^{-1}(\boldsymbol{x} - \boldsymbol{\mu}_k)\nonumber\\
&=&-\boldsymbol{x}^T\sum_k\omega_k(\boldsymbol{x})R({\boldsymbol{x}})^{-1}\Sigma_k^{-1}\boldsymbol{x} + \boldsymbol{x}^T\sum_k\omega_k(\boldsymbol{x})R({\boldsymbol{x}})^{-1}\Sigma_k^{-1}\boldsymbol{\mu}_k\nonumber\\
&=&-\sum_k\omega_k(\boldsymbol{x})\boldsymbol{x}^TA_k(\boldsymbol{x})^TA_k(\boldsymbol{x})\boldsymbol{x} + \sum_k\omega_k(\boldsymbol{x})\boldsymbol{x}^TA_k(\boldsymbol{x})^TA_k(\boldsymbol{x})\boldsymbol{\mu}_k\nonumber\\
&\leq&-\sum_k\omega_k(\boldsymbol{x})\|A_k(\boldsymbol{x})\boldsymbol{x}\|^2 + \sum_k\omega_k(\boldsymbol{x})\|A_k(\boldsymbol{x})\boldsymbol{x}\|\|A_k(\boldsymbol{x})\boldsymbol{\mu}_k\|\nonumber\\
&=&\sum_k\omega_k(\boldsymbol{x})\|A_k(\boldsymbol{x})\boldsymbol{x}\|(\|A_k(\boldsymbol{x})\boldsymbol{\mu}_k\| - \|A_k(\boldsymbol{x})\boldsymbol{x}\|)\nonumber\\
&\leq&\sum_k\omega_k(\boldsymbol{x})\|A_k(\boldsymbol{x})\boldsymbol{x}
\|(\sqrt{\lambda_{\mathrm{max}}^k(\boldsymbol{x})}\|\boldsymbol{\mu}_k\| - \sqrt{\lambda_{\mathrm{min}}^k(\boldsymbol{x})}\|\boldsymbol{x}\|)\nonumber
< 0
\end{eqnarray}
\normalsize

Therefore, for any $L>\Upsilon$, the trajectory starting from any point on $\|\boldsymbol{x}\|=L>\Upsilon$ always enters into
the bounded set $\|\boldsymbol{x}\|\leq L$, which implies that $\{\boldsymbol{x}(t):t\geq 0\}$ is bounded.
\end{proof}

Algorithm \ref{alg:FindTPs} demonstrates the procedures for finding TEVs and the weighted graph $G$.
The input \( f \) is a density function obtained from \textbf{DPClustering}. For example, \( p \) can be computed using (\ref{eq:mog2}) from \(\{\pi_k\}, \{\boldsymbol{\mu}_k\} \), \( \{\boldsymbol{\Sigma}_k\} \) output by Algorithm \ref{alg:DPMoGhard}, with \( f = -\ln p \). \( m \) and \( \tau \), are hyperparameters.
We use the centers $\{\boldsymbol{\mu}_k\}$ instead of SEVs, as the density of a center is very close to the local minimum, indicating an SEV nearby. This approach omits additional steps to find SEVs, saving computation time compared to previous studies. 

To find TEVs, we present an efficient modification of the quadratic string search method in \cite{lee2007quadratic}. We constrain $\boldsymbol{m}^{t-1}$ from the last iteration to be the vertex of the quadratic function $y=-x^2+x$ in the transformed coordinates, where the two centers form the unit vector from on the x-axis. We split $\boldsymbol{u}$ into $m+1$ intervals and calculate the corresponding points on the quadratic function for each vertex of the interval. Then, we find the point with the smallest density and iterate \eqref{eq:grad2} from this point to compute $\boldsymbol{m}^t$ (line 9-12). Figure \ref{fig:findtev} illustrates this procedure. The remaining parts of the algorithm identifies the index-1 equilibrium vectors from the TEV candidates (line 16) and filters TEVs satisfying \eqref{eq:524} (lines 17-20).
The next result shows the dynamical invariance property of the MoG system \eqref{eq:grad2} that preserves DP, implying that Algorithm \ref{alg:FindTPs} does not incur any additional privacy loss.

\begin{theorem}[Preserving Differential Privacy]\label{thm:2}
Let $\mathcal{M}$ with $\mathrm{Range}(\mathcal{M})\subseteq\mathcal{X}$ be a randomized algorithm that is $(\epsilon,\delta)$-differentially private. Then under the same condition of Theorem \ref{thm:1}, the inductive dynamical processing (\ref{eq:grad2}) applied to $\mathcal{M}$ is $(\epsilon,\delta)$-differentially private, i.e. the solution trajectory $\boldsymbol{x}(t)$ of the MoG gradient system (\ref{eq:grad2}) with initial condition $\boldsymbol{x}\in \mathcal{M}$ is $(\epsilon,\delta)$-differentially private. 
\end{theorem}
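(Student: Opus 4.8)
The plan is to reduce the statement to the post-processing property of differential privacy. The core observation is that, once $\mathcal{M}$ has released the MoG parameters $\theta=(\{\pi_k\},\{\boldsymbol{\mu}_k\},\{\boldsymbol{\Sigma}_k\})$ together with the initial point $\boldsymbol{x}(0)=\boldsymbol{x}\in\mathcal{M}(D)$, the subsequent dynamical processing never touches the raw dataset $D$ again: the vector field of~(\ref{eq:grad2}) is built entirely from $\theta$, and the initial condition is itself supplied by $\mathcal{M}$. Hence the whole trajectory is a deterministic function of the output of $\mathcal{M}$, and it suffices to verify that this function is well defined and then to invoke post-processing.

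First I would make the flow map precise. For a fixed parameter vector $\theta$, let $V_\theta(\boldsymbol{x})=-R(\boldsymbol{x})^{-1}\nabla\ln p(\boldsymbol{x})$ denote the right-hand side of~(\ref{eq:grad2}), and let $\Phi_t^\theta$ be the associated time-$t$ flow, so that $\boldsymbol{x}(t)=\Phi_t^\theta(\boldsymbol{x}(0))$. I would then argue that $\Phi_t^\theta$ is a well-defined, single-valued map of its initial condition. This is exactly where the hypothesis on $R$ enters: because $R(\boldsymbol{x})$ is symmetric positive definite with a uniformly finite condition number, its eigenvalues are bounded away from $0$ and $\infty$, so $R(\boldsymbol{x})^{-1}$ exists and $\lVert R(\boldsymbol{x})^{-1}\rVert$ stays controlled. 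Since the MoG log-density is smooth ($p>0$ everywhere, so $\nabla\ln p$ is $C^\infty$), the field $V_\theta$ is locally Lipschitz; existence and uniqueness of solutions then follow from the Picard--Lindel\"of theorem, and together with the standing assumption that trajectories are defined for all $t\in\mathbb{R}$, the flow $\Phi_t^\theta$ is a genuine deterministic mapping for every $t$.

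With well-posedness in hand, I would assemble the deterministic post-processing map. Define $f$ to send the output of $\mathcal{M}$, namely the pair $(\theta,\boldsymbol{x}(0))$, to the trajectory value $\Phi_t^\theta(\boldsymbol{x}(0))$ (or, if desired, to the entire orbit $\{\Phi_t^\theta(\boldsymbol{x}(0))\}_{t}$). By the previous paragraph $f$ is an honest function of $\mathcal{M}(D)$, and a deterministic function is a special case of a randomized mapping, so the post-processing property of differential privacy applies verbatim: since $\mathcal{M}$ is $(\epsilon,\delta)$-differentially private, so is $f\circ\mathcal{M}$. Because $\boldsymbol{x}(t)=(f\circ\mathcal{M})(D)$, the trajectory is $(\epsilon,\delta)$-differentially private, which is the claim. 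The same reasoning applies inductively to the TEV search of Algorithm~\ref{alg:FindTPs}, since each point it produces by iterating~(\ref{eq:grad2}) is again a post-processing of $\mathcal{M}$ and so incurs no additional privacy loss.

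The main obstacle is thus not the privacy bookkeeping, which is immediate once the reduction is in place, but the well-posedness step: one must guarantee that the dynamical processing defines a \emph{single-valued} map depending on $D$ only through $\mathcal{M}(D)$. This is precisely the role of the finite-condition-number hypothesis on $R$---were $R$ allowed to degenerate, $R(\boldsymbol{x})^{-1}$ could blow up, solutions of~(\ref{eq:grad2}) could fail to exist or to be unique, and the trajectory would cease to be a deterministic function of $\mathcal{M}$, so the post-processing property could no longer be invoked. I would also be careful to record that the inductive nature of the MoG system---its independence from the raw samples once $\theta$ is fixed---is what rules out any hidden extra query to $D$ that would otherwise spoil the reduction.
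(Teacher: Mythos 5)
Your proposal is correct and takes essentially the same route as the paper: the paper also treats the trajectory as a deterministic flow $\Phi_t$ of system (\ref{eq:grad2}) (well-defined because the vector field depends only on the released MoG parameters, with trajectories globally defined), and then verifies the $(\epsilon,\delta)$ inequality by pulling an arbitrary event $\mathcal{O}$ back to the preimage set $\mathcal{W}_t=\{\boldsymbol{x}\in\mathcal{X}:\boldsymbol{x}(t)\in\mathcal{O}\}$ and applying the DP guarantee of $\mathcal{M}$ --- which is exactly the post-processing argument you invoke, merely written out instead of cited. One minor correction: the finite-condition-number hypothesis is used (via the boundedness argument in the proof of Theorem \ref{thm:1}) to keep trajectories bounded and the flow complete, not to secure local existence or uniqueness, which already follow from the smoothness of the MoG density and the pointwise positive definiteness of $R$; so your claim that a degenerate $R$ could destroy \emph{uniqueness} is imprecise, though it does not affect the validity of your proof.
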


\begin{algorithm}[!ht]
  \caption{TEVGraph}
  \label{alg:FindTPs}
\begin{algorithmic}[1]
  \STATE {\bfseries Input:} Centers $\{\boldsymbol{\mu}_k\}_{k=1}^{K}$, density function $f$, line search parameter $m$, number of iterations $\tau$
  \STATE {\bfseries Output:} Weighted graph $G=(V,E)$
  \STATE $V\leftarrow \{\boldsymbol{\mu}_k\}_{k=1}^{K}$;\quad$T\leftarrow \emptyset$
  \FOR{$k=1$ to $K$}
    \FOR{$l=k+1$ to $K$}
        \STATE $i^*\leftarrow \argmax_{i \in \{1,\ldots,m\}}{f(\mu_k+\frac{i}{m+1}(\mu_l-\mu_k))}$
        \STATE $\boldsymbol{m}^0\leftarrow \mu_k+\frac{i^*}{m+1}(\mu_l-\mu_k)$;\quad$\boldsymbol{m}^0\leftarrow$ Integrate (\ref{eq:grad2}) from $\boldsymbol{m}^0$
        \FOR{$t=1$ to $\tau$} 
            \STATE $\boldsymbol{u}\leftarrow \boldsymbol{\mu}_l-\boldsymbol{\mu}_k$;\quad$\boldsymbol{v}\leftarrow \boldsymbol{m}^0-\boldsymbol{\mu}_k$
            \STATE $i^*\leftarrow \argmax_{i \in \{1,\ldots,m\}}{f(\mu_k+\frac{i}{m+1}\boldsymbol{u}}+(\frac{i}{m+1}-(\frac{i}{m+1})^2)(4\boldsymbol{v}-2\boldsymbol{u}))$
            \STATE $\boldsymbol{m}^t\leftarrow \mu_k+\frac{i^*}{m+1}\boldsymbol{u}+(\frac{i^*}{m+1}-(\frac{i^*}{m+1})^2)(4\boldsymbol{v}-2\boldsymbol{u})$
            \STATE $\boldsymbol{m}^t\leftarrow$ Integrate (\ref{eq:grad2}) from $\boldsymbol{m}^t$
        \ENDFOR
        \STATE $\boldsymbol{t}_{tmp}\leftarrow$ Find the solution of $\nabla f(\boldsymbol{x})=0$ from $\boldsymbol{m}^{\tau}$
        \STATE$T\leftarrow T\cup\{\boldsymbol{t}_{tmp}\}$
    \ENDFOR
  \ENDFOR
  \FOR {$\boldsymbol{t}\in T$}
    \IF {Hessian $\nabla^2f(\boldsymbol{t})$ has one negative eigenvalue} 
        \STATE $\boldsymbol{e}\leftarrow$ Eigenvector corresponding to the negative eigenvalue
        \STATE $\boldsymbol{x}_0\leftarrow\boldsymbol{t}+\varepsilon\boldsymbol{e}$;\quad$\boldsymbol{x}_1\leftarrow\boldsymbol{t}-\varepsilon\boldsymbol{e}$\quad for small $\varepsilon>0$
        \STATE $\boldsymbol{\mu}_0, \boldsymbol{\mu}_1\leftarrow$ Numerically integrate (\ref{eq:grad2}) from $\boldsymbol{x}_0, \boldsymbol{x}_1$
        \IF {$\boldsymbol{\mu}_0\neq\boldsymbol{\mu}_1$}
            \STATE $E\leftarrow E\cup\langle\boldsymbol{\mu}_0, \boldsymbol{\mu}_1, f(\boldsymbol{t})\rangle$
        \ENDIF
    \ENDIF
  \ENDFOR
\end{algorithmic}
\end{algorithm}

\begin{figure}[t]
\centering
\includegraphics[width=0.5\linewidth]{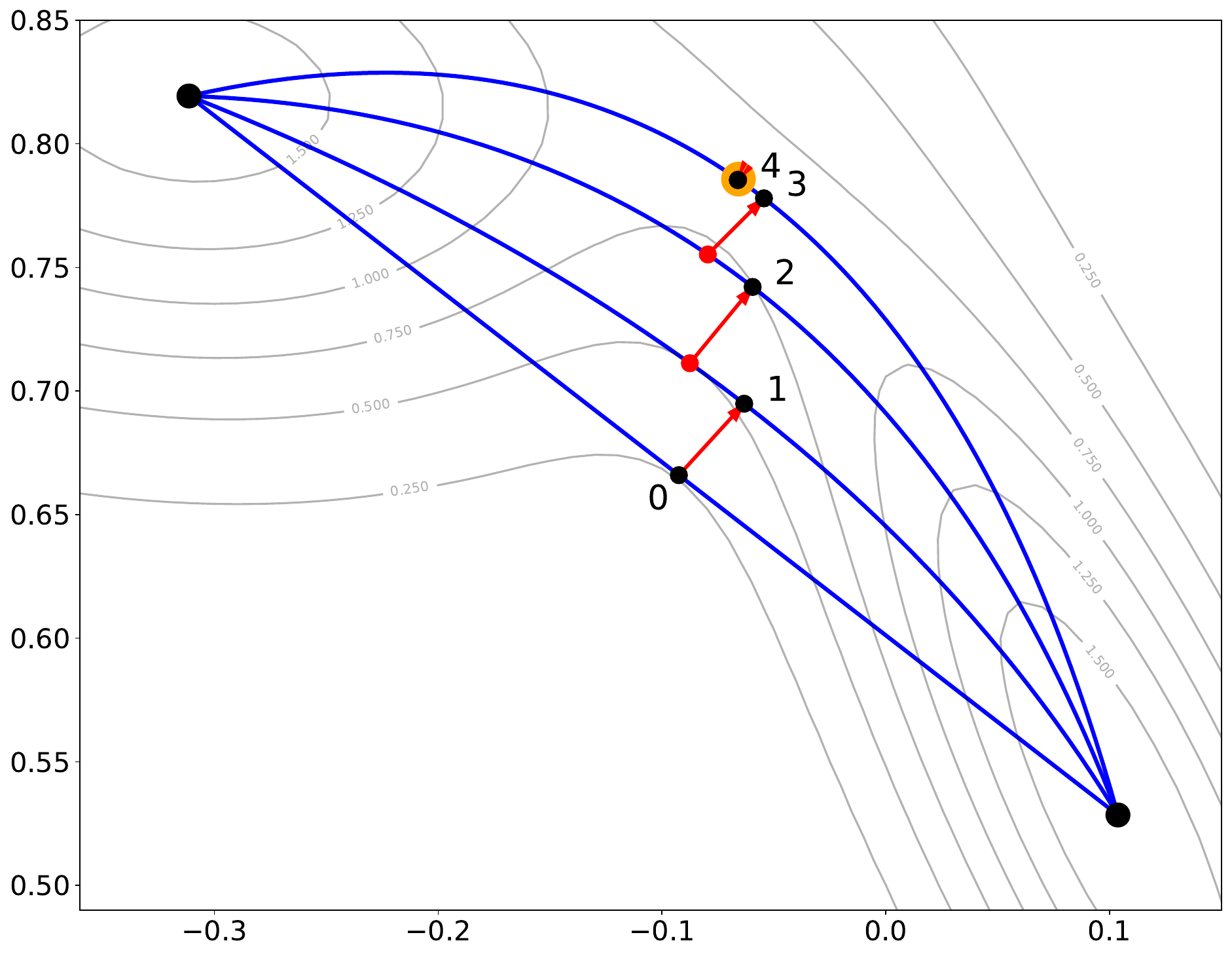}
\caption{Procedure to find TEVs. Big black points are MoG centers, and the big orange point is the TEV. Small black points with numbers represent $\boldsymbol{m}^t$ at step $t$, and small red points indicate minimum density points on the quadratic string. 
}
\label{fig:findtev}
\end{figure}

\begin{proof}
Define the flow $\Phi_t:\mathcal{X}\to \mathcal{X}$ of the MoG gradient system (\ref{eq:grad2}) by $\Phi_t(\boldsymbol{x})=\boldsymbol{x}(t)$ with initial condition $\boldsymbol{x}(0)=\boldsymbol{x}$ for $t\in \mathbb{R}$. Then by the fundamental theorem of the flow defined by the system (\ref{eq:grad2}), we have $\Phi_{s+t}(\boldsymbol{x})=\Phi_{s}\circ\Phi_{t}(\boldsymbol{x})$ for all $\boldsymbol{x}\in \mathcal{X}$ and $s,t\in \mathbb{R}$. (See \cite{khalil2002nonlinear} for more details.) Now let a pair of neighboring databases $D_1, D_2$ with $\|D_1-D_2\|_1\leq 1$ be given.
Then for all $\boldsymbol{x}\in \mathcal{M}(D_1)$
\begin{eqnarray}
\boldsymbol{x}(t)=\Phi_{t}\circ\Phi_{0}(\boldsymbol{x})=\Phi_{t}\circ \boldsymbol{x}(0)
\end{eqnarray}
For any event $\mathcal{O}\subset \mathrm{Range}(\mathcal{M})\subseteq\mathcal{X}$ and any $t\in \mathbb{R}$, we let $\mathcal{W}_t=\{\boldsymbol{x} \in  \mathcal{X}: \boldsymbol{x}(t)\in \mathcal{O}\}$. Then we have
\begin{eqnarray}
\mathrm{Pr}\{\boldsymbol{x}(t)\in &\mathcal{O}&: \boldsymbol{x}\in \mathcal{M}(D_1)\}=\mathrm{Pr}[\mathcal{M}(D_1)\in \mathcal{W}_t]\nonumber\\
&\leq & e^{\epsilon}\mathrm{Pr}[\mathcal{M}(D_2)\in \mathcal{W}_t]+\delta \nonumber\\
& = & e^{\epsilon}\mathrm{Pr}\{\boldsymbol{x}(t)\in \mathcal{O}: \boldsymbol{x}\in \mathcal{M}(D_2)\}+\delta
\end{eqnarray}
Since this result works for any $t\in \mathbb{R}$, the inductive dynamical processing applied to $\mathcal{M}$ by system (\ref{eq:grad2}) is $(\epsilon,\delta)$-differentially private. 
\end{proof}

\subsection{Hierarchical merging of sub-clusters}\label{sec:hie_merg}

In the final step, sub-clusters are hierarchically merged until the desired number of clusters is achieved, using the weighted graph obtained in the previous step. 
Let the number of clusters, $K$, be arbitrarily given. 
The method begins with each center representing a singleton cluster. The detailed procedure is demonstrated in Algorithm \ref{alg:mergeSCs}. Denote these clusters $C_1=\{\boldsymbol{\mu}_1\},...,C_L=\{\boldsymbol{\mu}_L\}$. At each step, the two separate clusters containing two adjacent SEVs with the smallest edge weight distance are merged into a single cluster \textcolor{black}{(lines 10-11)}, producing one less cluster at the next higher level \textcolor{black}{(lines 12-13)}. This process continues until $K$ clusters are obtained from the initial $L$ clusters \textcolor{black}{(line 9)}. Since Algorithm \ref{alg:mergeSCs} operates only on the centers and TEVs, which preserve DP, it inherently maintains $(\epsilon, \delta)$-DP by the post-processing property.

Another feature of the MoG system is its agglomerative property, i.e. system \eqref{eq:grad2} can construct a hierarchy of clusters starting from the basin cells. This ensures that Algorithm \ref{alg:mergeSCs} can achieve any number of clusters.

\begin{theorem}[Agglomerative Property]\label{thm:connect}
Let $\boldsymbol{x}^0_i, i=1,...,s$ and $\boldsymbol{x}_j^1, j=1,...,t$ be the SEVs and the TEVs of the MoG system (\ref{eq:grad2}), respectively. Consider an agglomerative process such that each basin cell $\mathfrak{B}(\boldsymbol{x}^0_1)$ starts in its own cluster and a pair of clusters are merged when the two separate clusters contain adjacent basin cells with the least edge weight $\mathit{f}(\boldsymbol{x}_j^1)$. Then the merging occurs when we decrease the level value $a$ starting from $\max_i\mathit{f}(\boldsymbol{x}_i^0)$ until it hits the value in $\{\mathit{f}(\boldsymbol{x}_1^1),...,\mathit{f}(\boldsymbol{x}_t^1)\}$ and this process is terminated when we get one cluster, say $\mathcal{X}$, starting from $s$ clusters.
\end{theorem}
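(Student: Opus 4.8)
The plan is to reduce the combinatorial merging in the statement to the evolution of the connected components of the level sets of $f$, for which the three Morse-theoretic characterizations recalled above (the rules governing $\#|\mathcal{X}_a|$) already do the heavy lifting. Write $f$ for the (log-)density driving system (\ref{eq:grad2}), so that the SEVs $\boldsymbol{x}^0_i$ are its local maxima and the TEVs $\boldsymbol{x}^1_j$ are exactly the index-one equilibria satisfying the Morse relation (\ref{eq:524}). I would phrase everything in terms of the superlevel sets $\{\boldsymbol{x} : f(\boldsymbol{x}) \ge a\}$, which are the sublevel sets of $-f$; the stated results for $\mathcal{X}_a$ then apply verbatim after this sign change, with $\#|\cdot|$ now counting clusters at resolution $a$.

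First I would record the level-set dynamics. As $a$ decreases from $\max_i f(\boldsymbol{x}^0_i)$, the count $\#$ increases by one precisely when $a$ passes a critical value $f(\boldsymbol{x}^0_i)$ (a new basin cell emerges), decreases by one precisely when $a$ passes a TEV value $f(\boldsymbol{x}^1_j)$ (two basin cells coalesce, by the Morse relation (\ref{eq:524})), and is otherwise locally constant, index-$k$ equilibria with $k \ge 2$ contributing nothing. This identifies the births of components with the $s$ SEVs and the merges with the $t$ TEVs.

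Next I would build the bridge to agglomerative clustering. For $a$ away from critical values, each component of the superlevel set contains a distinguished collection of SEVs, and I would show that two SEVs lie in the same component if and only if they are joined by a chain of TEVs each of height at least $a$. Here I would lean on the adjacency result cited in Section \ref{morse}: two SEVs $\boldsymbol{x}^0_i, \boldsymbol{x}^0_j$ are adjacent exactly when a TEV $\boldsymbol{x}^1_{ij}$ lies in $\mathfrak{B}(\boldsymbol{x}^0_i) \cap \mathfrak{B}(\boldsymbol{x}^0_j)$, so that crossing the height $f(\boldsymbol{x}^1_{ij})$ merges precisely the two basin cells that TEV separates. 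Forming the graph whose vertices are the SEVs and whose edges are the TEVs weighted by $f(\boldsymbol{x}^1_j)$, the components of the superlevel set at level $a$ are then exactly the connected components of the subgraph that retains edges of weight at least $a$; the agglomerative rule of the statement is Kruskal's construction of these components as $a$ is lowered, so a merge fires exactly when $a$ meets a TEV value, matching the previous paragraph.

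Finally, termination follows from connectivity of $\mathcal{X} = \mathbb{R}^D$: the MoG superlevel sets are connected for all sufficiently small $a$ (the union of the large, overlapping ellipsoidal regions of the individual Gaussians), so $\# \to 1$, and balancing the $s$ births against the $t$ merges of the first step forces $t = s-1$. Hence the TEV graph is a spanning tree over the SEVs, every TEV triggers a merge, and the process collapses the $s$ initial basin cells into the single cluster $\mathcal{X}$. I expect the genuine obstacle to be the middle step, namely certifying that the connected component of a superlevel set is exactly the chain-of-TEVs equivalence class of its SEVs; the clean route is through the \emph{merge tree} (join tree) of a Morse function, with the cited adjacency result guaranteeing that each coalescence of components is witnessed by a TEV on the shared basin boundary and, conversely, that no two components merge without crossing such a TEV.
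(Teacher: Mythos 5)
Your first two steps track the paper's proof of Theorem~\ref{thm:connect} closely: the paper disposes of the merging-at-TEV-values claim in a single sentence (``the first part of the proof comes from the Morse theory''), and the bridge you flag as the genuine obstacle --- that two SEVs lie in the same component of a superlevel set if and only if they are connected in the TEV-weighted graph, so that the agglomerative rule is Kruskal's construction run over decreasing $a$ --- is exactly Proposition~\ref{pro:1}, which the paper imports from prior work rather than reproving. So the step you single out as the hard one is, within this paper's economy, already settled, and your treatment of it is sound.

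The genuine gap is in your final paragraph. You assert that the MoG superlevel sets are connected for all sufficiently small levels, justifying this only with the parenthetical about ``large, overlapping ellipsoidal regions''; but this is precisely the one claim the paper proves in detail, and it is not automatic --- at intermediate thresholds the superlevel set of a MoG is a disjoint union of blobs, so one must exhibit a strictly positive threshold $\eta$ below which $\mathcal{X}_r$ is connected, and everything downstream in your argument (the count $t=s-1$, the spanning-tree conclusion, termination at the single cluster $\mathcal{X}$) presupposes it. The paper's route: Cholesky-factorize $\Sigma_k^{-1}=U_k^TU_k$, bound the singular values of the $U_k$ uniformly by $0<\underline{\sigma}\leq\overline{\sigma}$, and deduce the explicit lower bound $p(\boldsymbol{x})\geq(2\pi/\underline{\sigma}^2)^{-D/2}e^{-\frac{1}{2}(\underline{\sigma}+\overline{\sigma})^2L^2}=:\eta$ on the ball $\mathcal{S}_L=\{\boldsymbol{x}:\|\boldsymbol{x}\|\leq L\}$ for suitably large $L$, so that $\mathcal{S}_L\subset\mathcal{X}_\eta$; then reuse the trajectory-boundedness computation from the proof of Theorem~\ref{thm:1} to show that every trajectory starting in $\mathcal{X}_r\setminus\mathcal{S}_L$ enters $\mathcal{S}_L$, making $\mathcal{S}_L$ a strong deformation retract of $\mathcal{X}_r$ and hence $\mathcal{X}_r$ connected for all $0<r<\eta$. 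If you want to avoid the flow argument, an elementary patch exists --- generically each compact component of $\{p\geq r\}$ contains a local maximum of $p$, i.e.\ an SEV, and the straight segments joining the finitely many SEVs pairwise have strictly positive density minima, so any $\eta$ below the least such minimum forces a single component --- but some such argument must actually be supplied; as written, your termination step is an assertion, not a proof.
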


\begin{algorithm}[!ht]
  \caption{MergeCluster}
  \label{alg:mergeSCs}
\begin{algorithmic}[1]
  \STATE {\bfseries Input:} Weighted graph $G=(V,E)$, number of clusters $K$
  \STATE {\bfseries Output:} Clusters $\{C\}_{k=1}^K$
  \STATE $L\leftarrow \vert V\vert$, $C_1\leftarrow\{\boldsymbol{\mu}_1\},\ldots, C_L\leftarrow\{\boldsymbol{\mu}_L\}$ 
  \IF {$\langle\boldsymbol{\mu}_i,\boldsymbol{\mu}_j, f(\boldsymbol{t})\rangle\in E$}
    \STATE $d(C_i, C_j)\leftarrow f(\boldsymbol{t})$
  \ELSE
    \STATE $d(C_i, C_j)\leftarrow \infty$
  \ENDIF
  \FOR {$l=1$ to $L-K$}
    \STATE Find the smallest $d$ and the corresponding $C_a, C_b$
    \STATE $C_{L+l}\leftarrow C_a\cup C_b$\textcolor{black}{\textit{ // Merge two adjacent SEVs as in \ref{sec:hie_merg}}}
    \STATE $d(C_{L+1}, C_c) = \min\{d(C_a, C_c), d(C_b, C_c)\}$ for all remaining $C_c$s
    \STATE Remove $C_a$ and $C_b$
  \ENDFOR
\end{algorithmic}
\end{algorithm}

\begin{proof}
The first part of the proof comes from the Morse theory. For the second part of the proof, it is sufficient to show that $\eta>0$ exists, with $\mathcal{X}_r$ connected for all $0<r<\eta$.
From the Cholesky factorization of $\Sigma_k^{-1}$, we can let $\Sigma_k^{-1} = U_k^TU_k$ where $U_k$ is an upper triangle matrix with positive diagonal elements. By the singular value decomposition theorem, $0<\sigma_d^{(k)}\|\boldsymbol{x}\|\leq \|U_k\boldsymbol{x}\|\leq \sigma_1^{(k)}\|\boldsymbol{x}\|$ where $\sigma_1^{(k)}\geq \cdots \geq \sigma_d^{(k)}$ are the singular values for $k=1,...,K$. Let $a:=\min_k \sigma_d^{(k)}$, $b:=\max_k \sigma_1^{(k)}$. Choose $\zeta>\max\{\frac{b}{a},\gamma\}\cdot\max_{k}\|\boldsymbol{\mu}_k\|$, then for all $\|\boldsymbol{x}\|= L>\zeta$, we have
\begin{eqnarray}
\|U_k(\boldsymbol{x}-\boldsymbol{\mu}_k)\| \leq \|U_k \boldsymbol{x}\|+\|U_k\boldsymbol{\mu}_k\| < bL + \frac{a\sigma_1^{(k)}}{b}L \leq (b+a)L,
\end{eqnarray}
By the proof of Theorem~\ref{thm:1}, every trajectory starting from $\|\boldsymbol{x}\| = L$ for any $L>\zeta$ always enters into $\mathcal{S}_L:=\{\boldsymbol{x}:\|\boldsymbol{x}\|\leq L\}$, a connected and bounded set.
It is enough to show that there exists a $\eta>0$ such that $\mathcal{S}_L \subset \mathcal{X}_\eta$.

\begin{eqnarray}
p(\boldsymbol{x})& = &\sum_{k=1}^K \pi_k(2\pi)^{-\frac{d}{2}}|\boldsymbol{\Sigma}_k|^{-\frac{1}{2}} e^{-\frac{1}{2}(\boldsymbol{x}-\boldsymbol{\mu}_k)^T\boldsymbol{\Sigma}_k^{-1}
(\boldsymbol{x}-\boldsymbol{\mu}_k)} \nonumber\\
& > &\sum_{k=1}^K \pi_k(2\pi)^{-\frac{d}{2}}|U_k|e^{-\frac{1}{2}(a+b)^2L^2}\nonumber\\
& \geq &  (2\pi/a^2)^{-\frac{d}{2}}e^{-\frac{1}{2}(a+b)^2L^2}.
\end{eqnarray}
The last inequality follows from $\sum_{k=1}^K \pi_k = 1$ and $|U_k|=\prod_{i=1}^d \sigma_i^{(k)} \geq (\sigma_d^{(k)})^d\geq a^d$.
When we choose
$\eta=(2\pi/a^2)^{-\frac{d}{2}}e^{-\frac{1}{2}(a+b)^2L^2}$, we have $\mathcal{S}_L \subset \mathcal{X}_r$ for all $0<r<\eta$.
Hence, by theorem \ref{thm:1}, for any point  $\boldsymbol{x}_0\in (\mathcal{X}_r\setminus \mathcal{S}_L )$, every trajectory starting from $\boldsymbol{x}_0$ should hit the boundary $L$ and enters into the region $\mathcal{S}_L $. This implies that the set $\mathcal{S}_L $ is a strong deformation retract of the level set $\mathcal{X}_r$, which shows that $\mathcal{X}_r$ is connected for all $r<\gamma$.
\end{proof}
This last step possesses a monotonicity property, meaning the dissimilarity between merged clusters increases monotonically with the level of the merger. As a result, a dendrogram can be plotted so that the height of each node is proportional to the inter-group dissimilarity value between its two daughters. The dendrogram is illustrated in Figure \ref{fig:dendrogram}.
\label{method}

\section{Experiments}
\label{experiment}

We evaluate the proposed method on various real-world datasets to verify that our method achieves better clustering results than existing methods.

\subsection{Datasets}
Six datasets were employed in this study: \textbf{Shuttle}, EMG physical action (\textbf{EMG}), MAGIC gamma telescope (\textbf{MAGIC}) from the UCI Machine Learing Repository \citep{Dua:2019}, Sloan digital sky survey (\textbf{Sloan}), Predicting pulsar star (\textbf{Pulsar}), and \textbf{USPS} from Kaggle\footnote{\href{https://www.kaggle.com/}{https://www.kaggle.com}} competitions.
To ensure DP, all variables were rescaled to the range of $[-1, 1]$.
The \textbf{EMG} dataset, originally with 10 classes, was modified by excluding similar actions. For the \textbf{USPS} dataset, dimensionality was reduced from 16$\times$16 to 3 using t-SNE.
A detailed overview of the datasets is in Table \ref{tab:data}. 

\begin{table}[t]
\footnotesize
\centering
\caption{Description of datasets.}
\label{tab:data}
\begin{tabular}{ccccc}
\hline
\textbf{Dataset} & \textbf{\begin{tabular}[c]{@{}c@{}}\# of \\ samples (N)\end{tabular}} & \textbf{\begin{tabular}[c]{@{}c@{}}\# of \\ variables (D)\end{tabular}} & \textbf{\begin{tabular}[c]{@{}c@{}}\# of \\ clusters (K)\end{tabular}} & \textbf{\begin{tabular}[c]{@{}c@{}}\# of \\ sub-clusters ($K_0$)\end{tabular}} \\ \hline
\textbf{Shuttle} & 43500 & 9 & 7 & 12 \\
\textbf{EMG} & 59130 & 8 & 6 & 10 \\
\textbf{MAGIC} & 19020 & 10 & 2 & 6 \\
\textbf{Sloan} & 10000 & 16 & 3 & 6 \\
\textbf{Pulsar} & 9273 & 8 & 2 & 6 \\
\textbf{USPS} & 7291 & 3 & 10 & 13 \\ \hline
\end{tabular}%
\normalsize
\end{table}

\subsection{Details on experiments}

We employed the following four baseline methodologies:
\begin{itemize}
    \item \textbf{DPMoG-hard}: The proposed differentially private MoG method aoutlined in Algorithm \ref{alg:DPMoGhard}.
    \item \textbf{DPLloyd} \citep{su2017differentially}\footnote{\href{https://github.com/DongSuIBM/PrivKmeans}{https://github.com/DongSuIBM/PrivKmeans}}: A differentially private adaptation of the Lloyd algorithm for k-means clustering. 
    \item \textbf{DPCube} \citep{balcan2017differentially}\footnote{\href{https://github.com/mouwenlong/dp-clustering-icml17}{https://github.com/mouwenlong/dp-clustering-icml17}}: A recent differentially private clustering approach designed for high-dimensional data.
    \item \textbf{DPTree}    \citep{chang2021locally}\footnote{\href{https://github.com/google/differential-privacy/tree/main/learning/clustering}{https://github.com/google/differential-privacy/tree/main/learning/clustering}}: \textcolor{black}{Generating a differentially private coreset algorithm.} 
\end{itemize}

To the best of our knowledge, we incorporated all methods with available source code. Other studies on differentially private k-means clustering primarily focus on theoretical utility without providing empirical validations or publicly available code. Note that each baseline method can be treated as an instantiation of \textbf{DPClustering} in Algorithm \ref{alg:proposed}.

To demonstrate the applicability of the proposed method (Algorithm \ref{alg:proposed}) across various baseline methods, we conducted a comparative analysis of these methods before and after the application of the proposed framework. For clarity, we denoted the application of the proposed method by appending `\textbf{-Morse}' to each method's name. For instance, using \textbf{DPMoG-hard} as \textbf{DPClustering}, the proposed method was referred to as \textbf{DPMoG-hard-Morse}. We then compared the clustering metrics of \textbf{DPMoG-hard} with \textbf{DPMoG-hard-Morse}, extending this comparison to all baseline methods.

For k-means clustering methods, additional privacy loss is incurred when privately calculating covariance matrices. The allocation of the privacy budget to covariance calculation is flexible. For example, in \textbf{DPLloyd-Morse}, $\rho$ for zCDP is assigned proportionally to the number of elements requiring added noise.
Due to the increased complexity of algorithms for \textbf{DPCube} and \textbf{DPTree} compared to \textbf{DPLloyd}, line searches were conducted for these methods to determine the optimal distribution of the privacy budget.

The clustering metric used in the evaluation was the adjusted Rand index (ARI), which ranges from 0 and 1, with values closer to 1 indicating superior clustering. While ARI typically requires true labels, alternative clustering metrics that do not depend on true labels, such as silhouette scores, were found unsuitable for capturing complex cluster shapes.

The total privacy budget $\epsilon$ was set to \{10, 5, 2, 1, 0.5\}. For $\epsilon<0.5$, baseline methods exhibited low ARI scores for most datasets, contrasting with experiments using other metrics such as the k-means objective, where performance is retained for smaller $\epsilon$. For \textbf{DPMoG-hard} and \textbf{DPMoG-hard-Morse}, $\epsilon=0.5$ were excluded due to their suboptimal performance. 

All the experiments were performed on a machine with 48 threads of Intel Xeon CPU E5-2680 v3 @ 2.50GHz CPU. The experiments were conducted using Python 3.6.9, repeated five times, and the average results were obtained. In addition to the privacy budget $\epsilon$, several parameters were determined. The number of iterations $\tau_1$ and $\tau_2$ were set to 10 and 5, respectively, for all experiments. In Algorithm \ref{alg:FindTPs}, $m$ and $\varepsilon$ were set to 20 and 0.05, respectively.

\subsection{Effectiveness of dynamical processing} 

\subsubsection{Results on differentially private mixture of Gaussians}
The clustering outcomes of \textbf{DPMoG-hard} and \textbf{DPMoG-hard-Morse} for the six datasets are illustrated in Figure \ref{fig:res1}. The application of Morse theory in \textbf{DPMoG-hard-Morse} consistently yields higher ARI values compared to \textbf{DPMoG-hard}. Notably, in the case of \textbf{Pulsar} dataset, ARI exhibits a notable increase of up to 0.2, signifying a substantial enhancement. For the other datasets, ARI generally increases by 0.05 to 0.1. These results suggest that the proposed method effectively merges the generated sub-clusters to accurately represent clusters with arbitrary shapes.

\begin{figure*}[t]
\centering
\subfigure[\textbf{Shuttle}]{%
\includegraphics[width=0.317\linewidth]{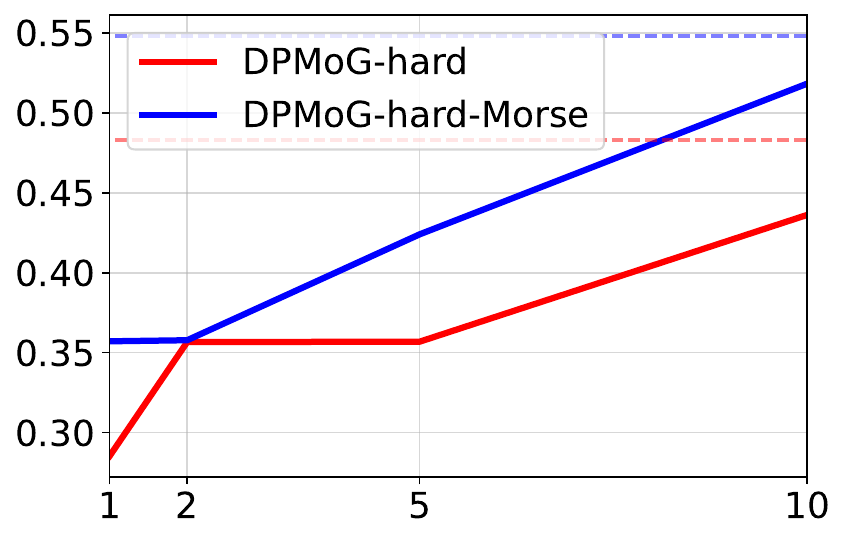}
\label{fig:shuttle-gmm}}
\subfigure[\textbf{EMG}]{%
\includegraphics[width=0.317\linewidth]{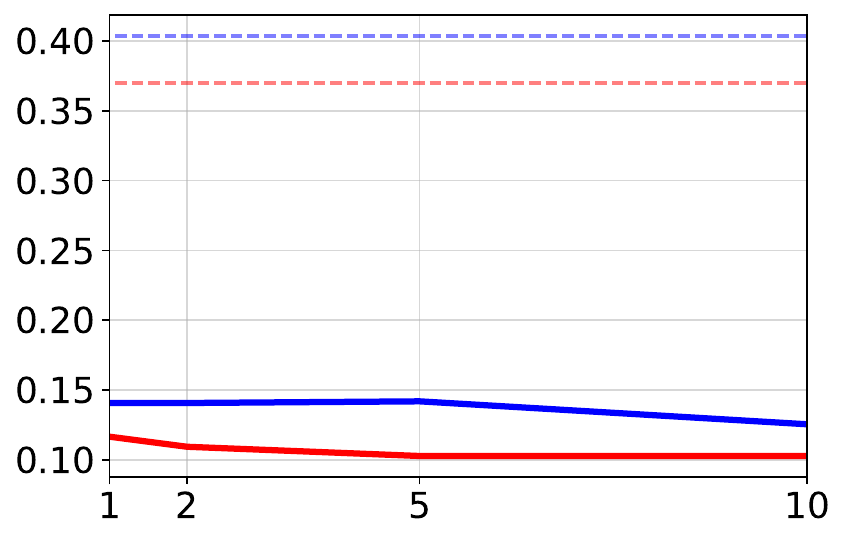}
\label{fig:emg-gmm}}
\subfigure[\textbf{MAGIC}]{%
\includegraphics[width=0.317\linewidth]{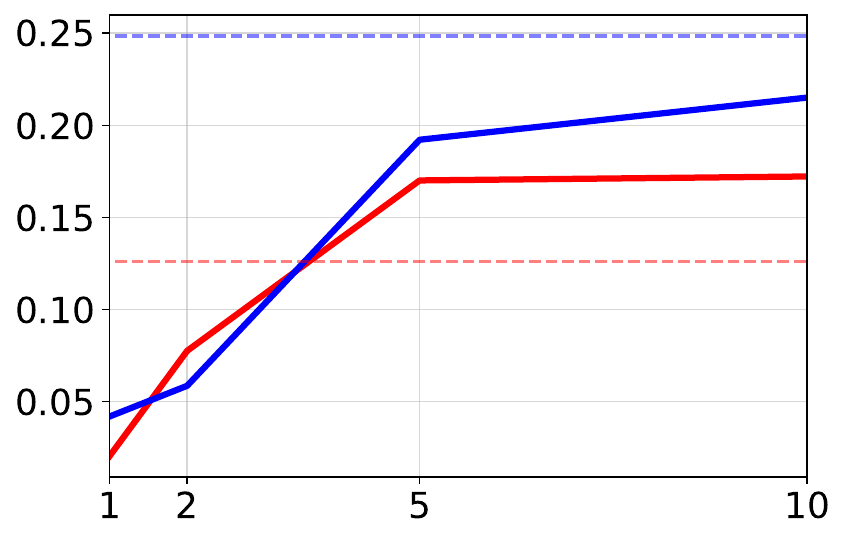}
\label{fig:magic-gmm}}
\subfigure[\textbf{Sloan}]{%
\includegraphics[width=0.317\linewidth]{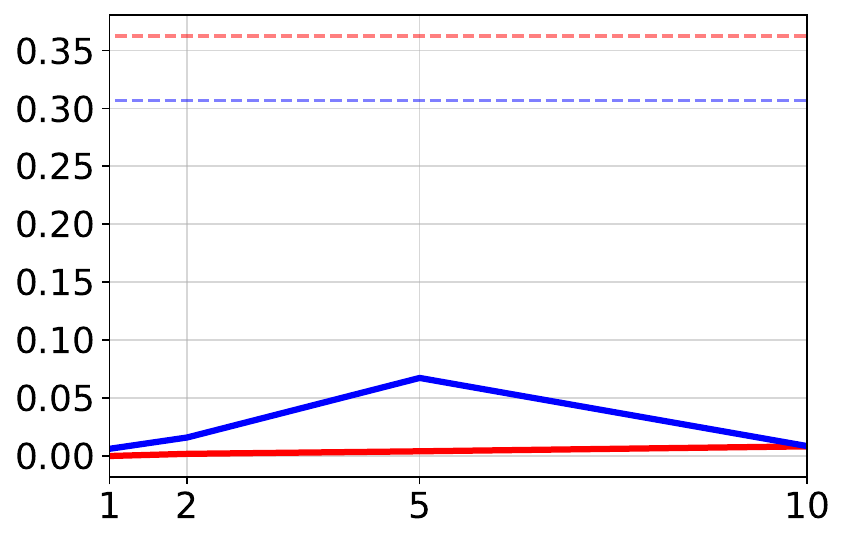}
\label{fig:sloan-gmm}}
\subfigure[\textbf{Pulsar}]{%
\includegraphics[width=0.317\linewidth]{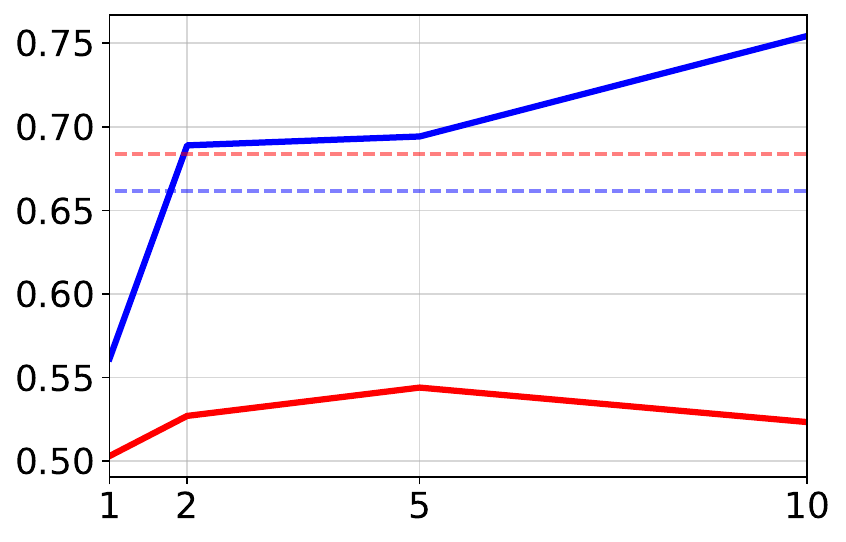}
\label{fig:pulsar-gmm}}
\subfigure[\textbf{USPS}]{%
\includegraphics[width=0.317\linewidth]{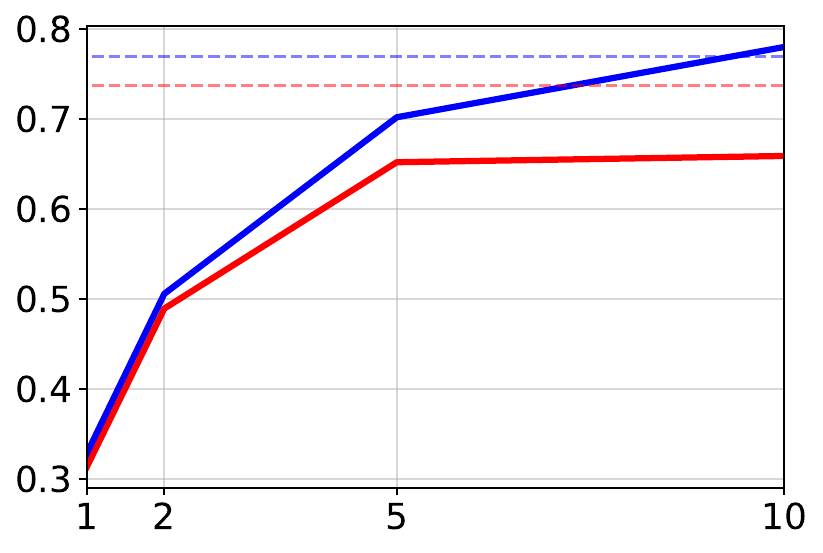}
\label{fig:usps-gmm}}
\caption{Clustering results of \textbf{DPMoG-hard} and \textbf{DPMoG-hard-Morse} for real-world datasets. The x-axis indicates the privacy budget  $\epsilon$, and the y-axis indicates the ARI score. 
The dotted lines indicate the performances of the non-private models.}
\label{fig:res1}
\end{figure*}

\begin{figure*}[t]
\centering
\subfigure[\textbf{Shuttle}]{%
\includegraphics[width=0.317\linewidth]{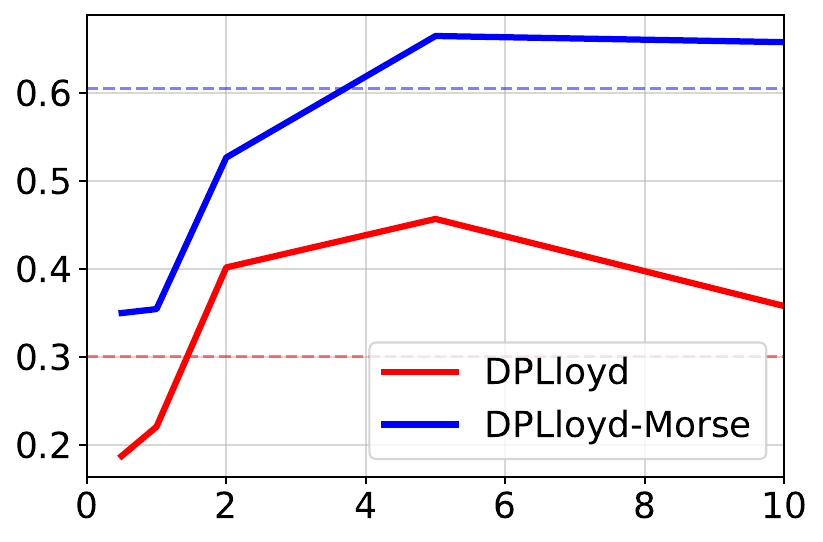}
\label{fig:shuttle-kmeans}}
\subfigure[\textbf{EMG}]{%
\includegraphics[width=0.317\linewidth]{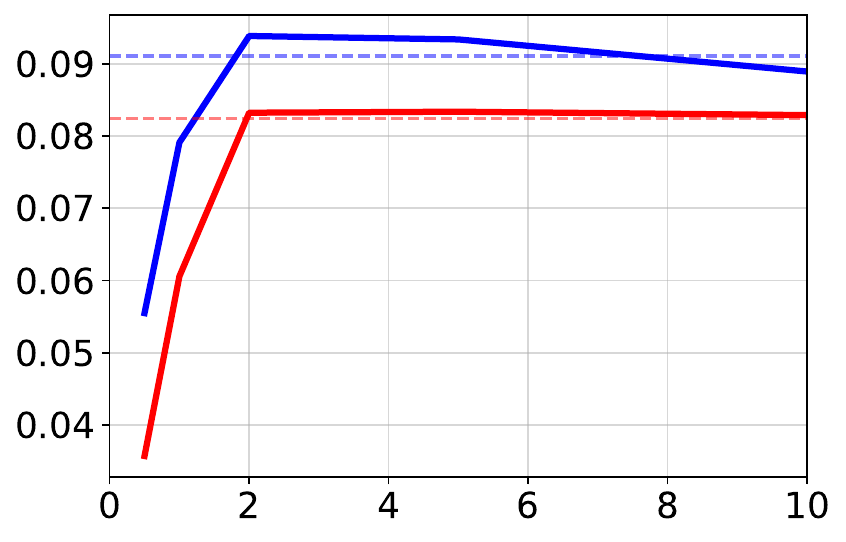}
\label{fig:emg-kmeans}}
\subfigure[\textbf{MAGIC}]{%
\includegraphics[width=0.317\linewidth]{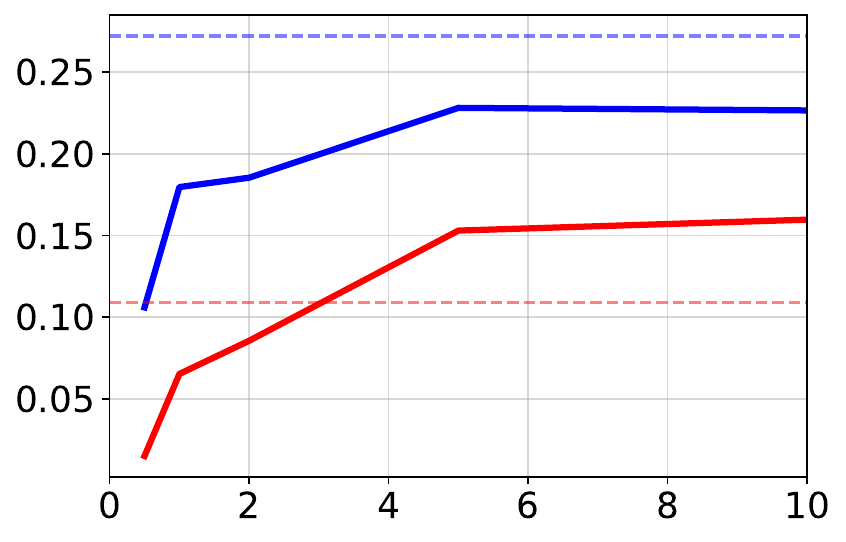}
\label{fig:magic-kmeans}}
\subfigure[\textbf{Sloan}]{%
\includegraphics[width=0.317\linewidth]{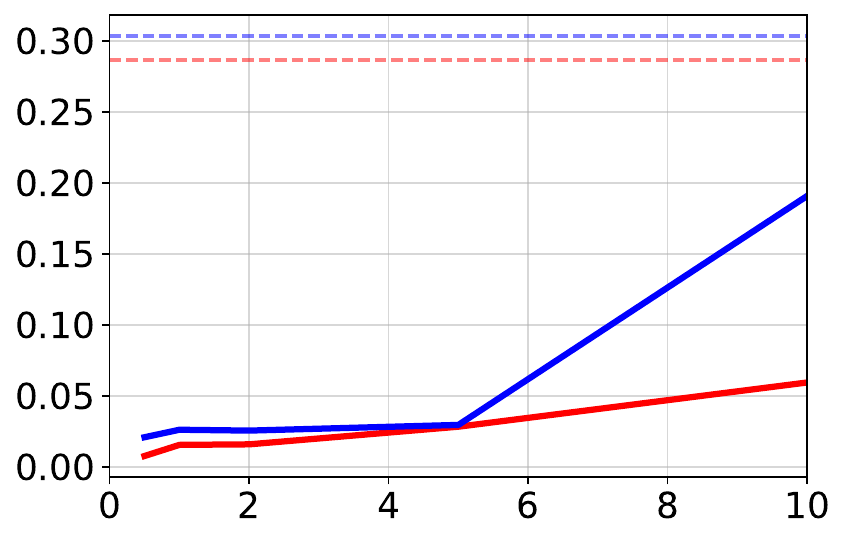}
\label{fig:sloan-kmeans}}
\subfigure[\textbf{Pulsar}]{%
\includegraphics[width=0.317\linewidth]{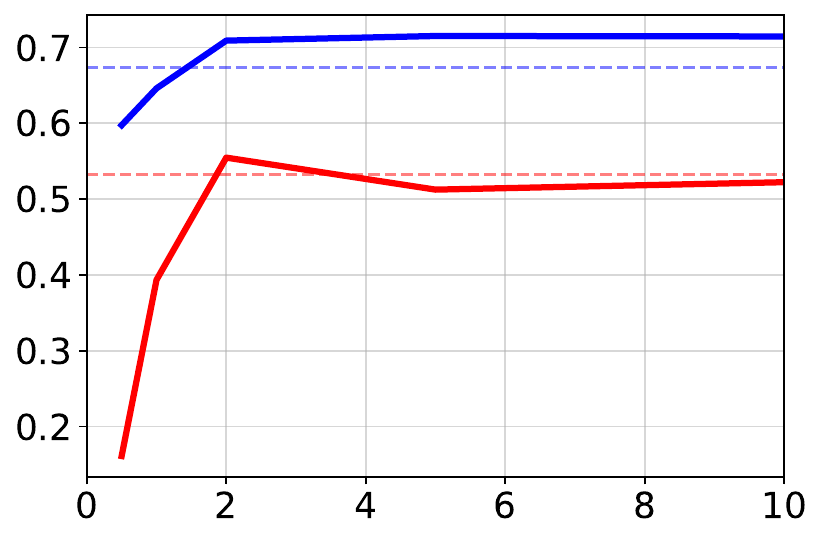}
\label{fig:pulsar-kmeans}}
\subfigure[\textbf{USPS}]{%
\includegraphics[width=0.317\linewidth]{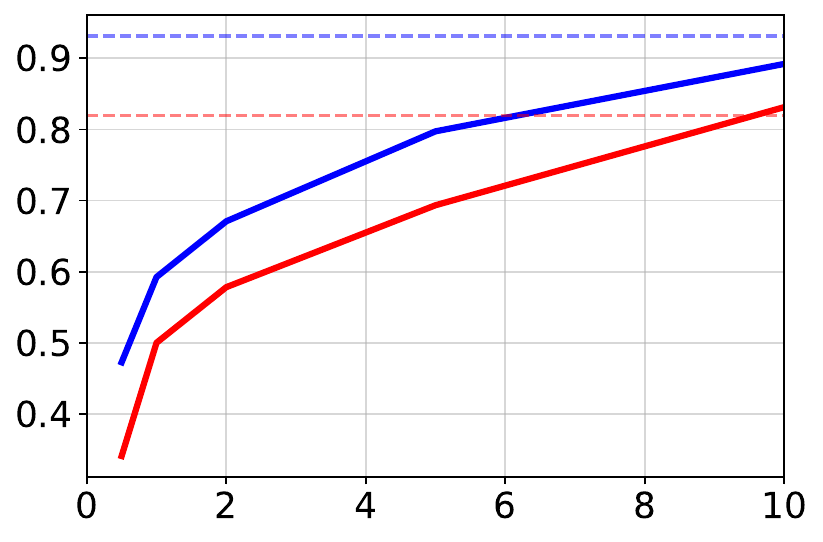}
\label{fig:usps-kmeans}}
\caption{Clustering results of \textbf{DPLloyd} and \textbf{DPLloyd-Morse} for real-world datasets. The x-axis indicates the privacy budget $\epsilon$, and the y-axis indicates the ARI score. The dotted lines indicate the performances of the non-private models.
}
\label{fig:res_kmeans}
\end{figure*}

\begin{figure*}[t]
\centering
\subfigure[\textbf{Shuttle}]{%
\includegraphics[width=0.317\linewidth]{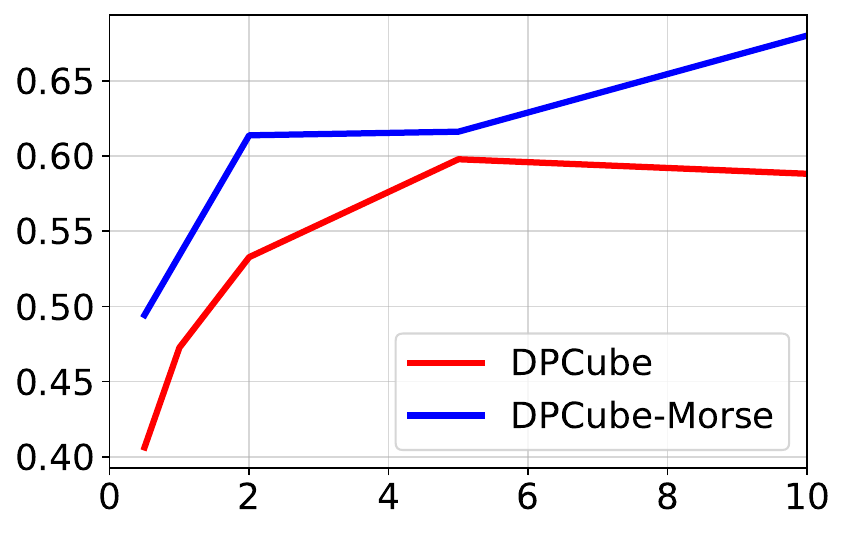}
\label{fig:shuttle-cube}}
\subfigure[\textbf{EMG}]{%
\includegraphics[width=0.317\linewidth]{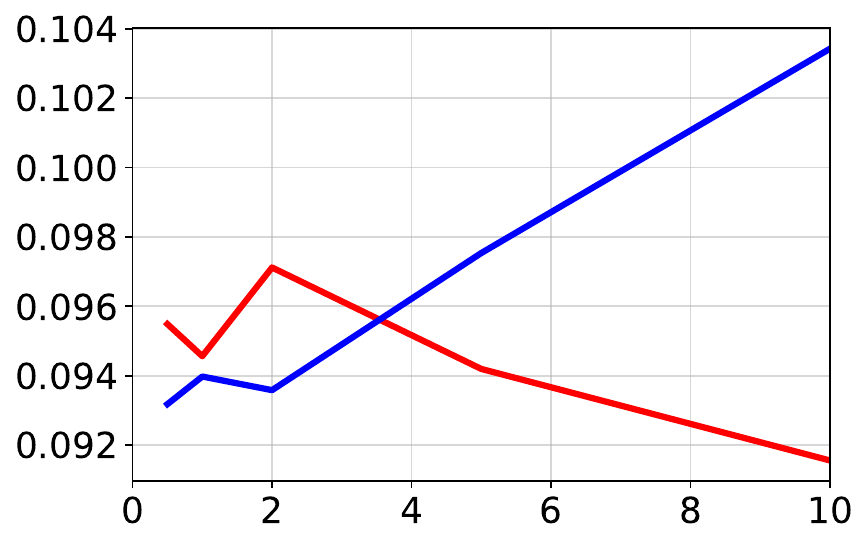}
\label{fig:emg-cube}}
\subfigure[\textbf{MAGIC}]{%
\includegraphics[width=0.317\linewidth]{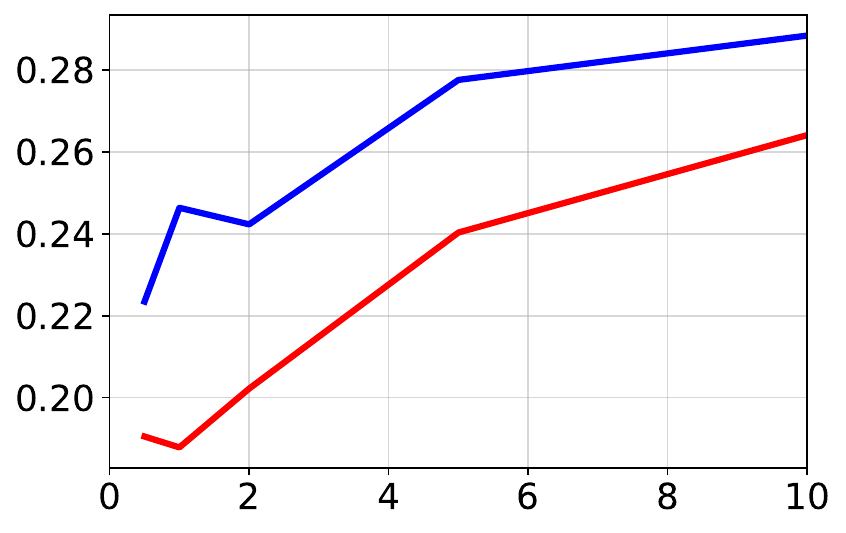}
\label{fig:magic-cube}}
\subfigure[\textbf{Sloan}]{%
\includegraphics[width=0.317\linewidth]{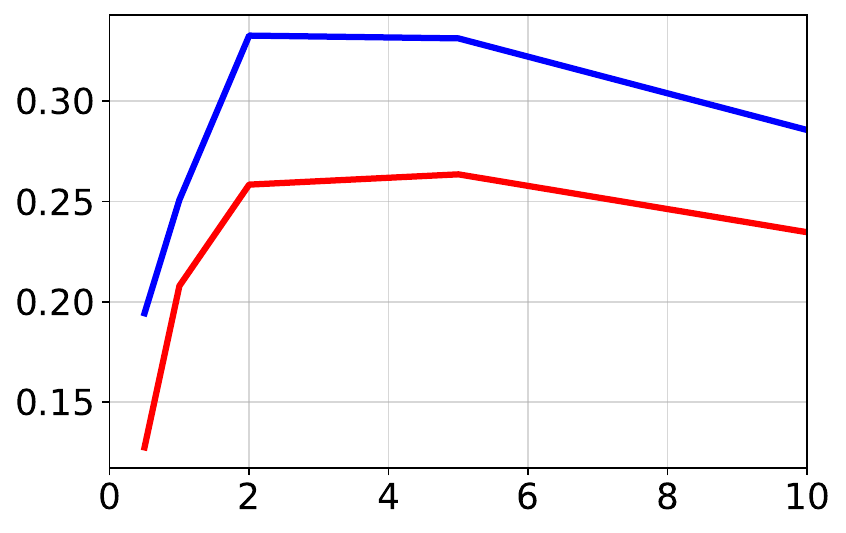}
\label{fig:sloan-cube}}
\subfigure[\textbf{Pulsar}]{%
\includegraphics[width=0.317\linewidth]{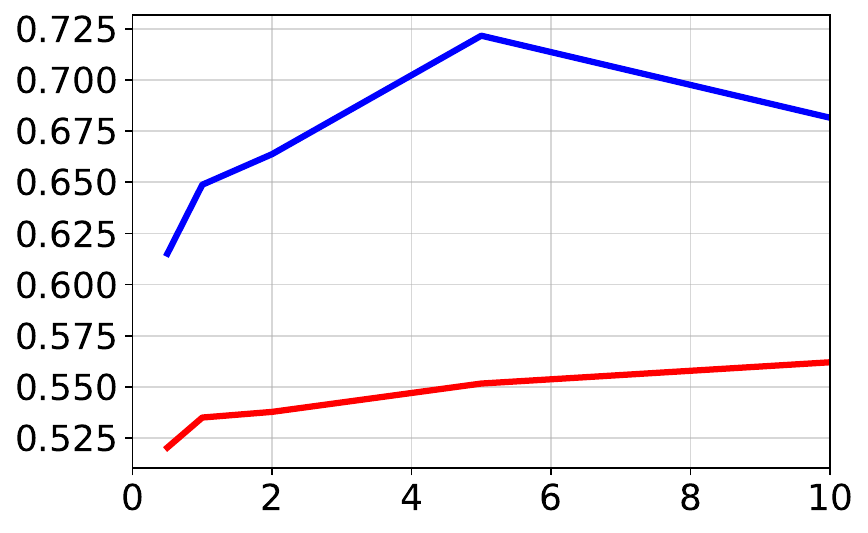}
\label{fig:pulsar-cube}}
\subfigure[\textbf{USPS}]{%
\includegraphics[width=0.317\linewidth]{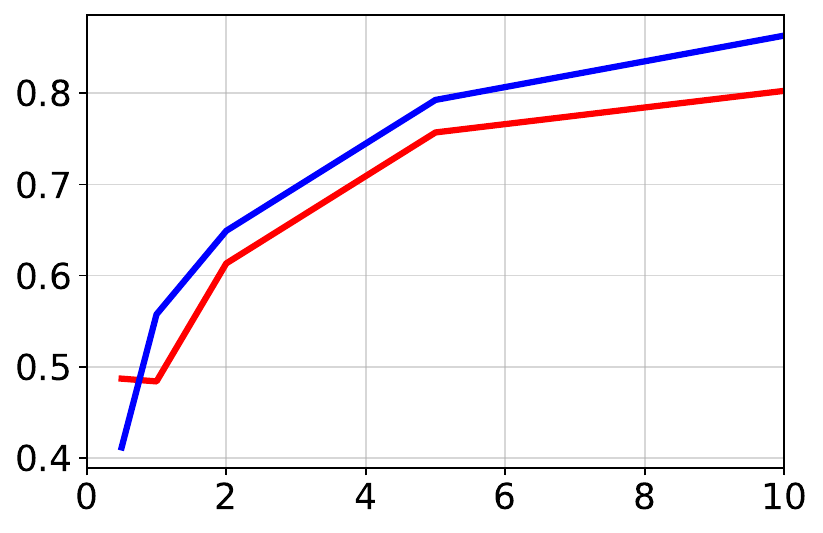}
\label{fig:usps-cube}}
\caption{Clustering results of \text{DPCube} and \textbf{DPCube-Morse} for real-world datasets. The x-axis indicates the privacy budget $\epsilon$, and the y-axis indicates the ARI score.
}
\label{fig:res_cube}
\end{figure*}

\begin{figure*}[t]
\centering
\subfigure[\textbf{Shuttle}]{%
\includegraphics[width=0.317\linewidth]{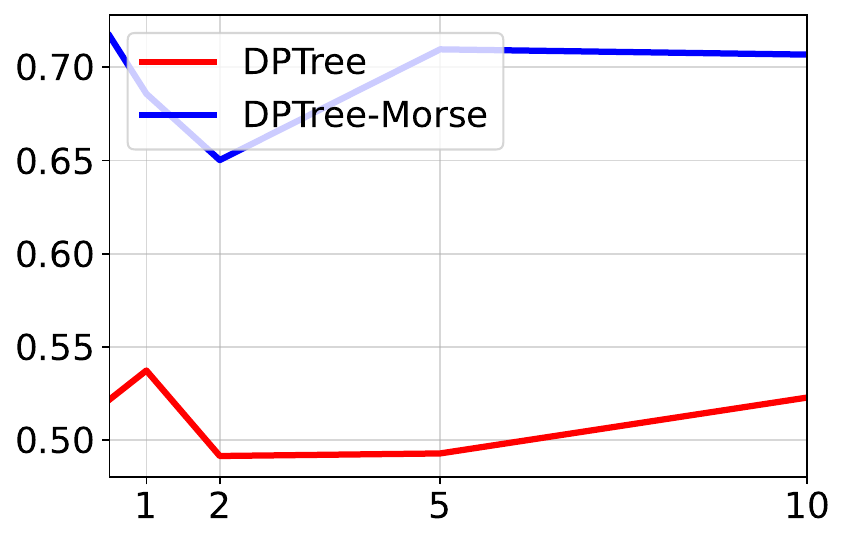}
\label{fig:shuttle-tree}}
\subfigure[\textbf{EMG}]{%
\includegraphics[width=0.317\linewidth]{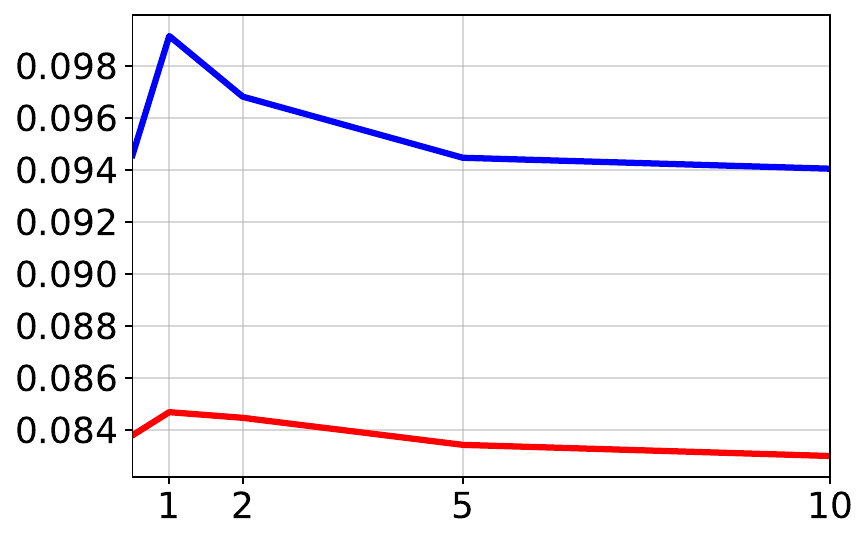}
\label{fig:emg-tree}}
\subfigure[\textbf{MAGIC}]{%
\includegraphics[width=0.317\linewidth]{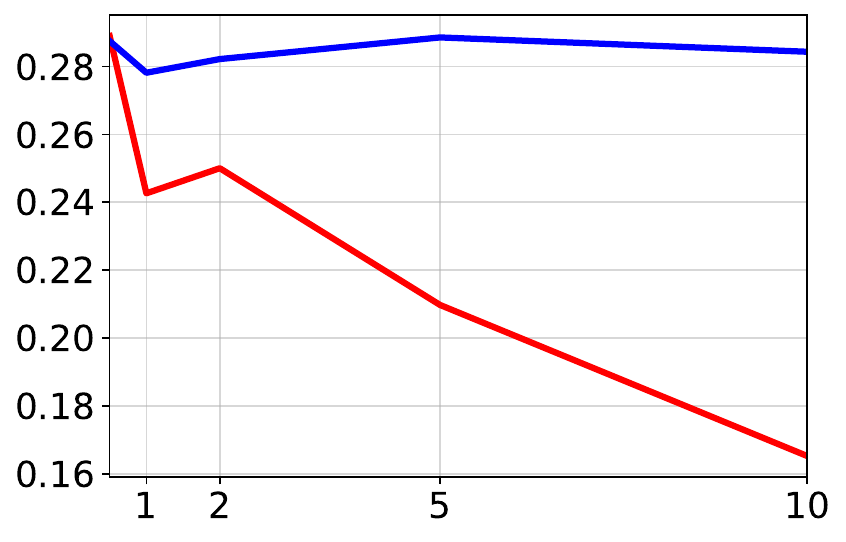}
\label{fig:magic-tree}}
\subfigure[\textbf{Sloan}]{%
\includegraphics[width=0.317\linewidth]{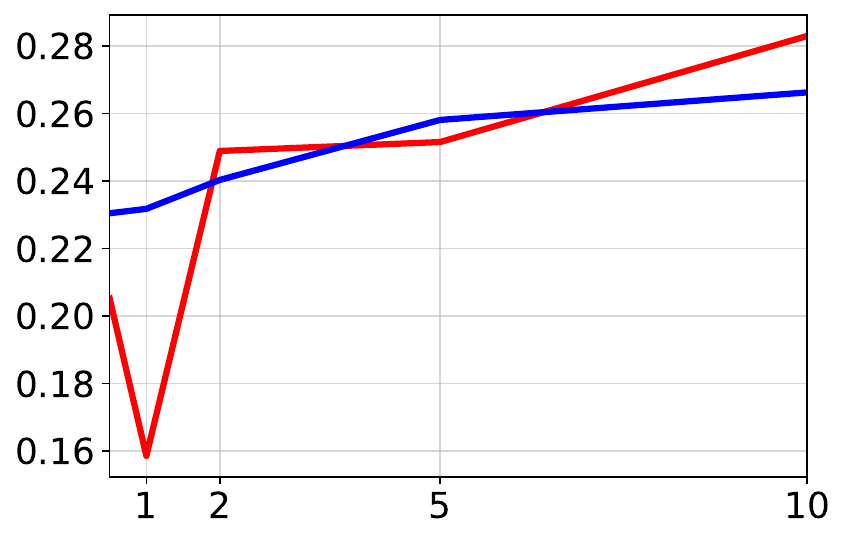}
\label{fig:sloan-tree}}
\subfigure[\textbf{Pulsar}]{%
\includegraphics[width=0.317\linewidth]{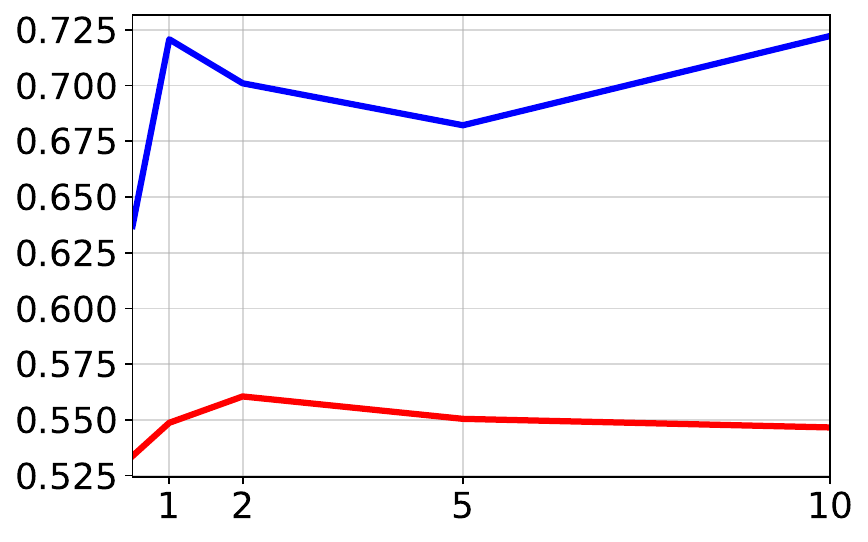}
\label{fig:pulsar-tree}}
\subfigure[\textbf{USPS}]{%
\includegraphics[width=0.317\linewidth]{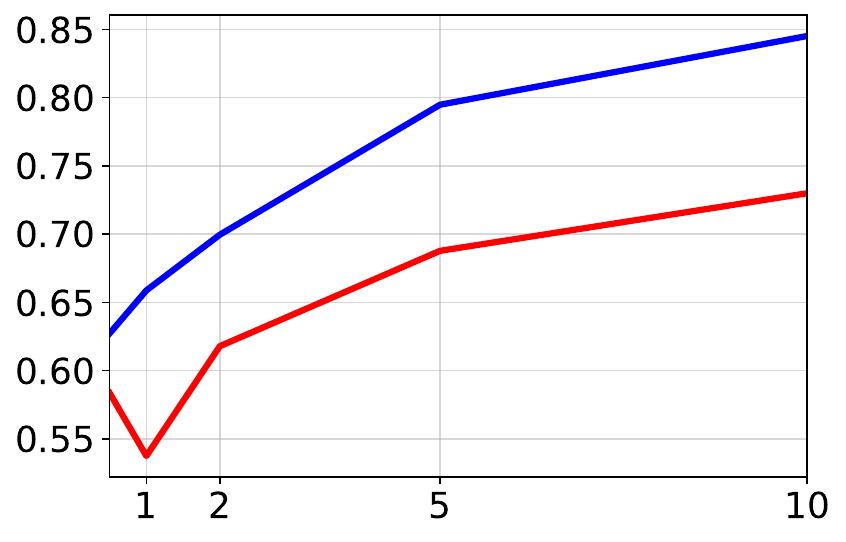}
\label{fig:usps-tree}}
\caption{Clustering results of \textbf{DPTree} and \textbf{DPTree-Morse} for real-world datasets. The x-axis indicates the privacy budget $\epsilon$, and the y-axis indicates the ARI score.
}
\label{fig:res_tree}
\end{figure*}

However, for most datasets, the disparity in ARI scores between \textbf{DPMoG-hard} and \textbf{DPMoG-hard-Morse} tends to diminish as $\epsilon$ decreases. Additionally, for the \textbf{EMG} and \textbf{Sloan} datasets, the improvement is relatively modest across all values of $\epsilon$. Both datasets consistently exhibit suboptimal performance compared to the non-private model even at $\epsilon=10$. This suggests a limitation of the proposed method, wherein the application of Morse theory does not significantly enhance performance when the performance of the baseline model is already severely compromised by noise to ensure DP. 

\subsubsection{Results on differentially private k-means}

Figure \ref{fig:res_kmeans} compares the clustering outcomes between \textbf{DPLloyd} and \textbf{DPLloyd-Morse} across the same six datasets. In nearly all instances, the ARI score of \textbf{DPLloyd-Morse} surpasses that of \textbf{DPLloyd}. Furthermore, akin to the observations for \textbf{DPMoG-hard}, the distinction in ARI scores between \textbf{DPLloyd} and \textbf{DPLloyd-Morse} diminishes with decreasing $\epsilon$. Specifically, for datasets such as \textbf{Shuttle}, \textbf{Magic}, \textbf{Pulsar}, and \textbf{USPS}, where ARI scores of \textbf{DPLloyd} exhibit moderate performance, the difference in ARI scores between \textbf{DPLloyd-Morse} and \textbf{DPLloyd} consistently proves to be significant, indicating a notable enhancement. Although the absolute improvement is marginal for \textbf{EMG} and \textbf{Sloan}, where \textbf{DPLloyd} demonstrates suboptimal ARI scores, the ratio between the ARI scores is found to be sufficiently substantial (at least a 12\% increase in most cases).

Figure \ref{fig:res_cube} illustrates the outcomes of \textbf{DPCube} and \textbf{DPCube-Morse}. Notably, \textbf{DPCube-Morse} consistently exhibits superior clustering performance compared to \textbf{DPCube} in regions characterized by higher privacy levels (larger values of $\epsilon$). In the case of \textbf{EMG} dataset, \textbf{DPCube} outperforms \textbf{DPCube-Morse} when $\epsilon\leq2$, though the difference is not significant due to the small scale of the y-axis. Compared to the results in Figure \ref{fig:res_kmeans}, two main differences are observed. Firstly, for the \textbf{Shuttle} and \textbf{Pulsar} datasets, \textbf{DPCube} exhibits superior clustering outcomes in high-privacy regions compared to \textbf{DPLloyd}, and this trend is preserved for \textbf{DPCube-Morse} relative to \textbf{DPLloyd-Morse}. This underscores the versatility of the proposed method across a spectrum of baseline models, from those with lower to higher performance levels. Secondly, \textbf{DPCube} consistently outperforms \textbf{DPLloyd} for the \textbf{MAGIC} and \textbf{Sloan} datasets. For \textbf{Sloan}, \textbf{DPCube} demonstrates moderate clustering results, while \textbf{DPLloyd}'s performance diminishes. This is because \textbf{DPCube} is effective for high-dimensional data. \textbf{DPCube-Morse} also shows improved performance, significantly surpassing \textbf{DPCube}.

Figure \ref{fig:res_tree} compares the clustering outcomes of \textbf{DPTree} and \textbf{DPTree-Morse}. The results parallel those of other methods, with ARI scores for \textbf{DPTree-Morse} consistently outperforming \textbf{DPTree} for all datasets except \textbf{Sloan}. In the case of \textbf{Sloan} dataset, \textbf{DPTree-Morse} exhibits superior performance in high-privacy regions, while both methods demonstrate comparable performance in low-privacy regions. An interesting observation is that, for the majority of datasets, the performance of \textbf{DPTree} appears to increase as $\epsilon$ decreases, a trend also evident in the performance of \textbf{DPTree-Morse}. This phenomenon may be attributed to the original design of \textbf{DPTree} as a model satisfying local DP, with the released code subsequently modified to conform to global DP, potentially leading to unintended outcomes.

In summary, the experimental results affirm that the proposed method consistently enhances the performance of various differentially private clustering methods. This enhancement is particularly pronounced when the baseline method exhibits a certain level of performance. The findings suggest the potential for the proposed method to be universally applicable, contingent upon the development of superior baseline models in future research.

\subsection{Effect of increasing sub-clusters} 
Theoretically, the proposed method can achieve the desired number of clusters by aggregating sub-clusters, regardless of the number of Gaussian sub-clusters created. However, due to the generation of different TEVs with varying numbers of centers, the final clustering outcomes exhibit variability. Therefore, an empirical investigation was conducted to assess the consistency of the final clustering performance across different numbers of sub-clusters.

\begin{table}[]
\caption{ARI scores of \textbf{DPMoG-hard-Morse} with different numbers of initial sub-clusters for \textbf{Pulsar} and \textbf{Magic} datasets.}
\label{tab:diffk}
\resizebox{\textwidth}{!}{%
\begin{tabular}{c|llll|llll}
\hline
\textbf{Dataset} & \multicolumn{4}{c|}{\textbf{Pulsar}} & \multicolumn{4}{c}{\textbf{Magic}} \\ \hline
\backslashbox{$\mathbf{\epsilon}$}
{$\mathbf{K_0}$} & \multicolumn{1}{c}{\textbf{6}} & \multicolumn{1}{c}{\textbf{10}} & \multicolumn{1}{c}{\textbf{15}} & \multicolumn{1}{c|}{\textbf{20}} & \multicolumn{1}{c}{\textbf{6}} & \multicolumn{1}{c}{\textbf{10}} & \multicolumn{1}{c}{\textbf{15}} & \multicolumn{1}{c}{\textbf{20}} \\ \hline
\textbf{10} & 0.754277 & 0.754838 & 0.766507 & 0.728404 & 0.259386 & 0.250220 & 0.248503 & 0.251199 \\
\textbf{5} & 0.694178 & 0.673453 & 0.659280 & 0.664384 & 0.217444 & 0.215024 & 0.209862 & 0.216947 \\
\textbf{2} & 0.688862 & 0.610148 & 0.596477 & 0.576486 & 0.083149 & 0.092159 & 0.087595 & 0.096408 \\
\textbf{1} & 0.561494 & 0.584608 & 0.577082 & 0.542264 & 0.040309 & 0.047554 & 0.040972 & 0.038205 \\ \hline
\end{tabular}%
}
\end{table}

Table \ref{tab:diffk} presents the clustering results of \textbf{DPMoG-hard-Morse} with varying numbers of initial sub-clusters for the \textbf{Pulsar} and \textbf{Magic} datasets. \textcolor{black}{For \textbf{Pulsar}, the observed trend indicates a tendency for clustering performance to decrease as the parameter $K_0$ increases. Two plausible interpretations arise from these results. Firstly, the results could be attributed to the increased number of TEVs. Alternatively, the increase in the number of} sub-clusters might reduce the number of samples belonging to each cluster, thereby amplifying the impact of noise. Conversely, the ARI scores for \textbf{Magic} does not exhibit a clear dependence on $K_0$. This may be attributed to the larger sample size of \textbf{Magic} compared to \textbf{Pulsar}. The sample complexity of the proposed method should be considered for future research.

\section{Discussion}
\label{discussion}
We proposed an effective differentially private clustering method that leverages Morse theory to express complex, nonconvex clusters without compromising privacy. Our theoretical results demonstrate that the dynamical processing associated with MoGs is completely stable and does not introduce any additional privacy loss to existing methods. Experimental results confirm that applying Morse theory enhances clustering utility for various baseline methods at the same privacy levels. Specifically, the proposed dynamical processing is versatile and can be integrated into any existing differentially private clustering approach capable of estimating an MoG density function.

Our experiments not only highlighted the strengths of the proposed method but also identified its limitations. Firstly, the proposed dynamical processing did not achieve significant performance improvements when the baseline method's performance was poor due to high privacy levels. Additionally, excessively increasing the number of sub-clusters can be detrimental to final performance when the sample size is insufficient. It is well known from numerous studies, including \cite{chaudhuri2011differentially}, that the performance of differentially private algorithms is better preserved with larger sample sizes.
Future research should investigate the impact of the number of sub-clusters on final performance and determine the optimal number of subclusters.

The proposed method can be extended to a diverse set of density functions. While the kernel density function is inherently not DP-friendly, ongoing research aims to overcome these limitations since kernel methods are essential for various unsupervised learning problems such as dimensionality reduction \citep{kim2014sentiment} and exemplar estimation \citep{son2018learning}. For example, \cite{park2023efficient} avoided using support vectors in the inference step by approximating the mapping of the kernel. Finally, the proposed method could be further extended by applying it to other hierarchical clustering methods \citep{xie2023scalable} using different distance measures.


\bibliographystyle{elsarticle-num}
\bibliography{references}





\end{document}